\newtheorem{theorem}{Theorem}
\newtheorem{lemma}[theorem]{Lemma}
\theoremstyle{definition}
\theoremstyle{remark}
\numberwithin{equation}{section}
\newcommand{\NonNegativeReals}{\ensuremath{\mathbb{R}_{\ge 0}}}
\newcommand{\integers}{\ensuremath{\mathbb{Z}}}
\newcommand{\reals}{\ensuremath{\mathbb{R}}}
\DeclareMathOperator*{\argmin}{arg\,min}
\DeclareMathOperator*{\argmax}{arg\,max}
\newcommand\id{\ensuremath{\mathds{1}}}
\newcommand{\expctover}[2]{\mathbb{E}_{#1}\!\left[#2\right]}
\newcommand{\abs}[1]{\left\vert#1\right\vert}
\newcommand{\given}{\mid}
\newcommand{\denselist}{\itemsep 0pt\topsep-6pt\partopsep-6pt}
\newcommand{\figref}[1]{Fig.~\ref{#1}}
\newcommand{\secref}[1]{\S\ref{#1}}
\newcommand{\lemref}[1]{Lemma~\ref{#1}}
\newcommand{\algref}[1]{Algorithm~\ref{#1}}
\newcommand{\lnref}[1]{Line~\ref{#1}}
\newcommand{\cA}{{\mathcal{A}}}
\newcommand{\cB}{{\mathcal{B}}}
\newcommand{\cC}{{\mathcal{C}}}
\newcommand{\cS}{{\mathcal{S}}}
\newcommand{\cX}{{\mathcal{X}}}
\newcommand{\bk}{{\mathbf{k}}}
\newcommand{\by}{{\mathbf{y}}}
\newcommand{\paren} [1] {\ensuremath{ \left( {#1} \right) }}
\newcommand{\curlybracket}[1]{\left\{#1\right\}}
\newcommand{\algname}{\textsf{Online-DSOpt}\xspace}
\newcommand{\alglbds}{\textsf{DSOpt-SA}\xspace}
\newcommand{\algdcds}{\textsf{DSOpt-DC}\xspace}
\newcommand{\alggreedy}{\textsf{Greedy}\xspace}
\newcommand{\algadd}{\textsf{Greedy-Add}\xspace}
\newcommand{\algrem}{\textsf{Greedy-Rem}\xspace}
\newcommand{\alginit}{\textsf{Init}\xspace}
\newcommand{\algonlinedso}{\textsf{Online-DSOpt}\xspace}
\newcommand{\algdsopt}{\textsf{DSOpt}\xspace}
\newcommand{\algmodmod}{\textsf{ModMod}\xspace}
\newcommand{\algsupsub}{\textsf{SupSub}\xspace}
\newcommand{\alglocalsearch}{\textsf{LocalSearch}\xspace}
\newcommand{\algdssa}{\textsf{DSConstruct-SA}\xspace}
\newcommand{\algdsdc}{\textsf{DSConstruct-DC}\xspace}
\newcommand{\algreward}{\textsf{ComputeReward}\xspace}
\newcommand{\utility}{f} 
\newcommand{\batchUtil}{F}
\newcommand{\batchUtilApprox}{\hat{\batchUtil}}
\newcommand{\ub}{\textsf{ub}}
\newcommand{\lb}{\textsf{lb}}
\newcommand{\dsf}{h}
\newcommand{\dsg}{g}
\newcommand{\ubdsf}{\ub^{\dsf}}
\newcommand{\lbdsg}{\lb^{\dsg}}
\newcommand{\cost}{c}
\newcommand{\GP}[1]{\text{GP}\paren{#1}} 
\newcommand{\mean}{\mu}
\newcommand{\cov}{k}
\newcommand{\bcov}{\bk}
\newcommand{\Cov}{K}
\newcommand{\normal}{\mathcal{N}}
\newcommand{\selected}{\cA} 
\newcommand{\constrDom}{\cC}
\newcommand{\constrSelected}{\cS}
\newcommand{\queryPool}{Q}
\newcommand{\constrSelectedAt}[1]{\constrSelected^{\paren{#1}}}
\newcommand{\constrDomAt}[1]{\constrDom^{\paren{#1}}}
\newcommand{\numpos}{L}
\newcommand{\pid}{\ell}
\newcommand{\varpid}{k}
\newcommand{\randBatch}{\cB_\queryPool}
\newcommand{\randState}{\phi}
\newcommand{\ex}{x} 
\newcommand{\obs}{y} 
\newcommand{\bobs}{\by} 
\newcommand{\exDom}{\cX}
\newcommand{\noise}{\varepsilon}
\newcommand{\batchBudget}{n}
\newcommand{\mulfun}{r}
\newcommand{\mulconvfun}{u}
\begin{document}
\twocolumn[

\aistatstitle{Batched Stochastic Bayesian Optimization via Combinatorial Constraints Design}

\addtocounter{footnote}{-1}

\aistatsauthor{ Kevin K. Yang\footnotemark \And Yuxin Chen \And Alycia Lee \And Yisong Yue }
\aistatsaddress{ FVL57 \And  Caltech \And Caltech \And Caltech} ]

\footnotetext{Research done while author was at Caltech.}

\begin{abstract}
 
In many high-throughput experimental design settings, such as those common in biochemical engineering, batched queries are more cost effective than one-by-one sequential queries. Furthermore, it is often not possible to directly choose items to query. Instead, the experimenter specifies a set of constraints that generates a library of possible items, which are then selected stochastically. Motivated by these considerations, we investigate \emph{Batched Stochastic Bayesian Optimization} (BSBO), a novel Bayesian optimization scheme for choosing the constraints in order to guide exploration towards items with greater utility. We focus on \emph{site-saturation mutagenesis}, a prototypical setting of BSBO in biochemical engineering, and propose a natural objective function for this problem. Importantly, we show that our objective function can be efficiently decomposed as a difference of submodular functions (DS), which allows us to employ DS optimization tools to greedily identify sets of constraints that increase the likelihood of finding items with high utility. Our experimental results show that our algorithm outperforms common heuristics on both synthetic and two real protein datasets.

 \end{abstract}


\section{Introduction}

Bayesian optimization is a popular technique for optimizing black-box objective functions, with applications in (sequential) experimental design,  parameter tuning, recommender systems and more. In the classical setting, Bayesian optimization techniques assume that items can be directly queried at each iteration. However, in many real-world applications such as those in biochemical engineering, direct querying is not possible: instead, a (constrained) library of items is specified, and then batches of items from the library are stochastically queried.

\begin{figure}[t]
  \centering
  \includegraphics[trim={0pt 0pt 0pt 0pt}, width=.45\textwidth]{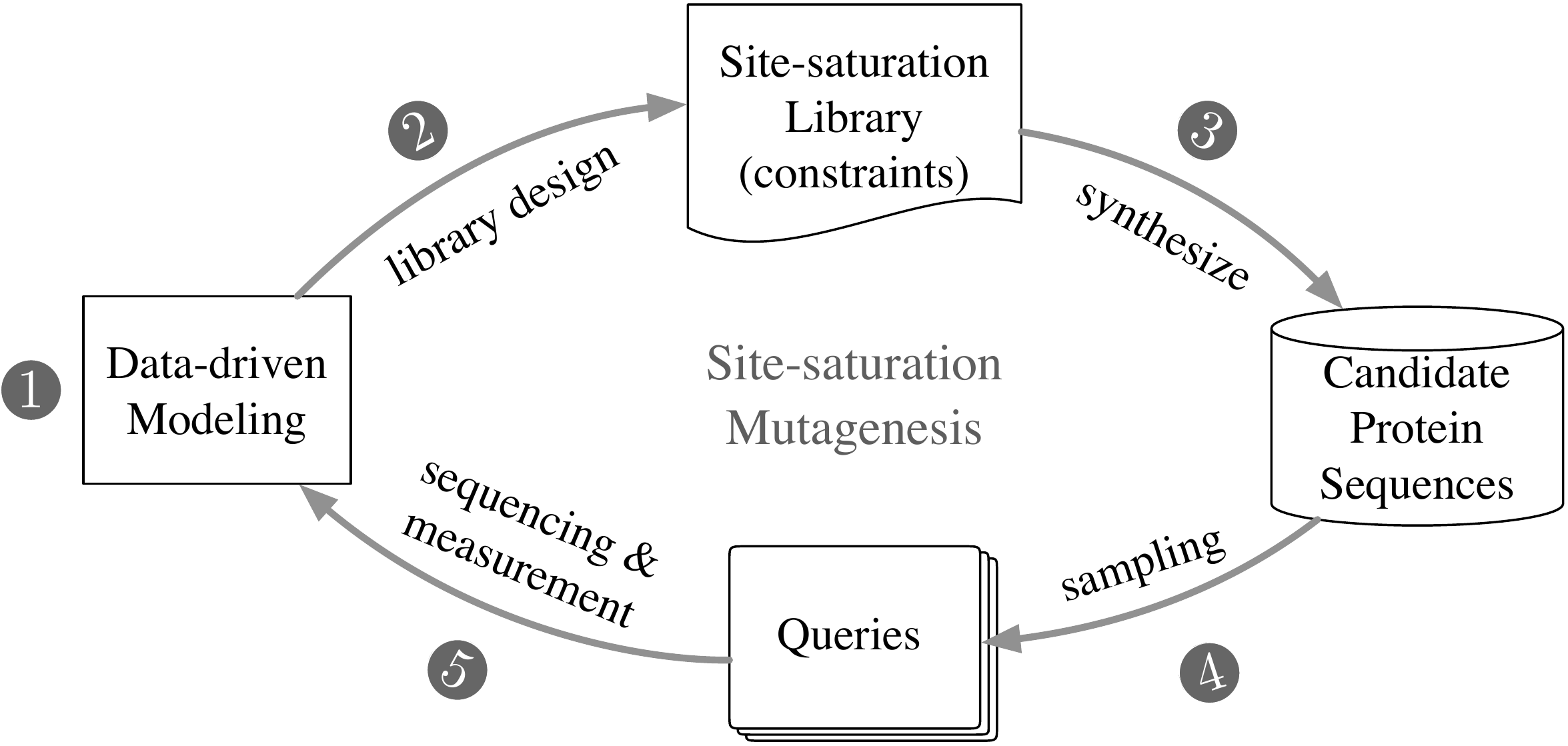}
  \caption{Data-driven site-saturation mutagenesis. (1) Machine learning model for predicting certain protein properties; (2) site-saturation library design; (3) synthesize protein sequences according to the site-saturation libraries; (4) randomly sample proteins for sequencing; (5) sequence and measure the properties of the sampled proteins.}
  \label{fig:alg:illustration}
  \vspace{-5mm}
\end{figure}

As a prototypical example in biochemical engineering, let us consider \emph{site-saturation mutagenesis} (SSM) \citep{voigt2001computationally}, a protein-engineering strategy that mutates a small number of critical sites in a protein sequence (cf. \figref{fig:alg:illustration}). At each round, a combinatorial library is designed by specifying which amino acids are allowed at the specified sites (step (1-3)), and then a batch of amino acid  sequences from the library is sampled with replacement (step 4). The sampled sequences are evaluated for their ability to perform a desired function (step 5), such as a chemical reaction. 

Ideally, at each iteration, the amino acids to be considered at each site should be chosen to maximize the number of improved sequences expected in the stochastic batch sample from the resulting library. Finding such libraries is highly non-trivial: it requires solving a combinatorial optimization problem over an exponential number of items. Libraries are designed by choosing the allowed amino acids at each site (`constraints') from the set of all amino acids at all sites. Adding allowed constraints results in an exponential number of items in the library. 
As mentioned above, these challenges are exacerbated due to the uncertainty from sampling batches of queries.
Thus, new optimization schemes and algorithmic tools are needed for addressing such problems.

\vspace{-2mm}
\paragraph{Our contribution} In this paper, 
we investigate \emph{Batched Stochastic Bayesian Optimization} (BSBO), a novel Bayesian optimization scheme for choosing a library design in order to guide exploration towards items with greater utility. This scheme is unique in that we choose a library design instead of directly querying items, and the items are queried in stochastic batches (e.g. 10-1000 items per batch). In particular, we focus on library design for site-saturation mutagenesis, and identify a natural objective function that evaluates the quality of a library design given the current information about the system. We propose \algname, an efficient online algorithm for optimization over stochastic batches. In a nutshell, \algname assembles each batch by decomposing the objective function into the difference of two submodular functions (DS). This allows us to employ DS optimization tools (e.g., \citet{narasimhan2005submodular}) to greedily identify sets of constraints that increase the likelihood of finding items with high utility. We demonstrate the performance of \algname on both synthetic and two experimentally-generated protein datasets, and show that our algorithm in general outperforms conventional greedy heuristics and efficiently finds rare, highly-improved, sequences.



\vspace{-2mm}
\section{Related Work}
\vspace{-2mm}
\paragraph{Bayesian optimization with Gaussian processes}
\looseness -1 Our work addresses a specific setting for Gaussian process (GP) optimization. GPs are infinite collections of random variables such that every finite subset of random variables has a multivariate Gaussian distribution.
A key advantage of GPs is that inference is very efficient, which makes them one of the most popular theoretical tools for Bayesian optimization
\citep{rasmussen:williams:2006, srinivas10gaussian, wang2016optimization}. 
 Notably, \citet{srinivas10gaussian} introduce the Gaussian Process Upper Confidence Bound (GP-UCB) algorithm for Bayesian Bandit optimization, which provides bounds on the cumulative regret when sequentially querying items. \citet{desautels2014parallelizing} generalize this to batch queries. In contrast to our setting, these algorithms require the ability to \emph{directly} query items, either sequentially or in batches. 
 
 \vspace{-2mm}
\paragraph{GP optimization for protein engineering} GP-UCB has been used to find improved protein sequences when sequences can be queried directly \citep{romero2013navigating,bedbrook2017machine}. GPs \citep{Saito2018MachineLearningGuidedMF} and other machine-learning methods \citep{wu2019machine} have been used to select constraints for SSM libraries. However, previous work relied on ad-hoc heuristics and do not provide a general procedure for selecting constraints in a model-driven way. 

\vspace{-2mm}
\paragraph{Information-parallel learning}
\looseness -1 In addition to the bandit setting \citep{desautels2014parallelizing}, there is a large body of literature on various machine learning settings 
that exploit information-parallelism. For example, in large-scale optimization, mini-batch/parallel training has been extensively explored to reduce the training time of stochastic gradient descent \citep{li2014efficient, zinkevich2010parallelized}. 
In batch-mode active learning \citep{Hoi2006BMA, guillory2010interactive,chen13near}, an active learner selects a set of examples to be labeled simultaneously. The motivation behind batch active learning is that in some cases it is more cost-effective to request labels in large batches, rather than one-at-a-time. This setting is also referred to as buy-in-bulk learning \citep{yang2013buy}. In addition to the simpler modeling assumption of being able to directly issue queries, these approaches also differ from our setting in terms of the objective: the batch-mode active learning algorithms aim to find a set of items that are maximally informative about some target hypothesis (hence to maximally explore), whereas we want to identify the best item (i.e., to both explore and exploit).

\vspace{-2mm}
\paragraph{Submodularity and DS optimization} 
Submodularity \citep{nemhauser1978analysis} is a key tool for solving many discrete optimization problems, and has been widely recognized in recent years in theoretical computer science and machine learning. While a growing number of previously studied problems can be expressed as submodular minimization \citep{jegelka2011submodularity} or maximization \citep{kempe2003maximizing, krause2007near} problems, standard maximization and minimization formulations only capture a small subset of discrete optimization problems. \citet{narasimhan2005submodular} show that any set function $q$ can be decomposed as the difference of two submodular functions $\dsf$ and $\dsg$. Replacing $\dsf$ with its modular upper bound, $\dsg$ with its modular lower bound, or both reduces the problem of minimizing $q$ to a series of submodular minimizations, submodular maximizations, or modular minimizations, respectively, that are guaranteed to reduce $q$ at every iteration and to arrive at a local minimum of $q$ \citep{iyer2012algorithms}. In general, computing a DS decomposition requires exponential time. We present two polynomial-time decompositions of our objective function.


\section{Problem Statement}
\subsection{Problem Setup}
We aim to optimize a black box utility function, $\utility: \exDom \rightarrow \reals$. In contrast to classical Bayesian optimization, which sequentially queries the function value $\utility(\ex_t)$ for an selected item $\ex_t \in \exDom$, we assume that the experimenter can only choose a subset of constraints (i.e., rules for generating items) from a ground set $\constrDom$, based on which a stochastic batch of items are generated and measured. More concretely, we consider the following interactive protocol, as illustrated in \figref{fig:bsbo-flowchart}. At each round the following happens: 
\begin{itemize}\denselist
    \item The algorithm chooses a set of constraints $\constrSelected \subseteq \constrDom$ based on current knowledge of $\utility$ (\figref{fig:bsbo-flowchart}, step (2)). 
    \item The chosen constraints are used to construct a \emph{library} of candidate queries: $\queryPool(\constrSelected) \subseteq \exDom$, where $\queryPool: 2^\constrDom \rightarrow 2^\exDom$ denotes the physical process that produces items under these constraints (\figref{fig:bsbo-flowchart}, step (3)).
    \item A batch of $\batchBudget$ queries $\randBatch(\constrSelected, \randState) \subseteq \exDom$ is randomly selected from the library $\queryPool(\constrSelected)$ 
    via a stochastic sampling procedure 
    (\figref{fig:bsbo-flowchart}, step (4)). 
    Here, $\randState$ represents the random state of the sampling procedure $\randBatch$.
    \item 
    When querying each item  $\ex \in \randBatch(\constrSelected, \randState)$, we observe the function value there, perturbed by noise: $\obs = \utility(\ex) + \noise(\ex)$. Here, the noise $\noise(\ex)\sim \normal(0, \sigma^2)$ represents i.i.d. Gaussian white noise. We further assume that querying each item  $\ex$ achieves reward $\utility(\ex)$ and incurs some cost $\cost(\{\ex\})$, where $\cost: 2^\exDom \rightarrow \reals$ denotes the cost function of a set of items (\figref{fig:bsbo-flowchart}, step (5)). 
    \item The results of the queries are used to update 
    $\utility$.
\end{itemize}
\begin{figure}
  \centering
  \includegraphics[trim={0pt 0pt 0pt 0pt}, width=.45\textwidth]{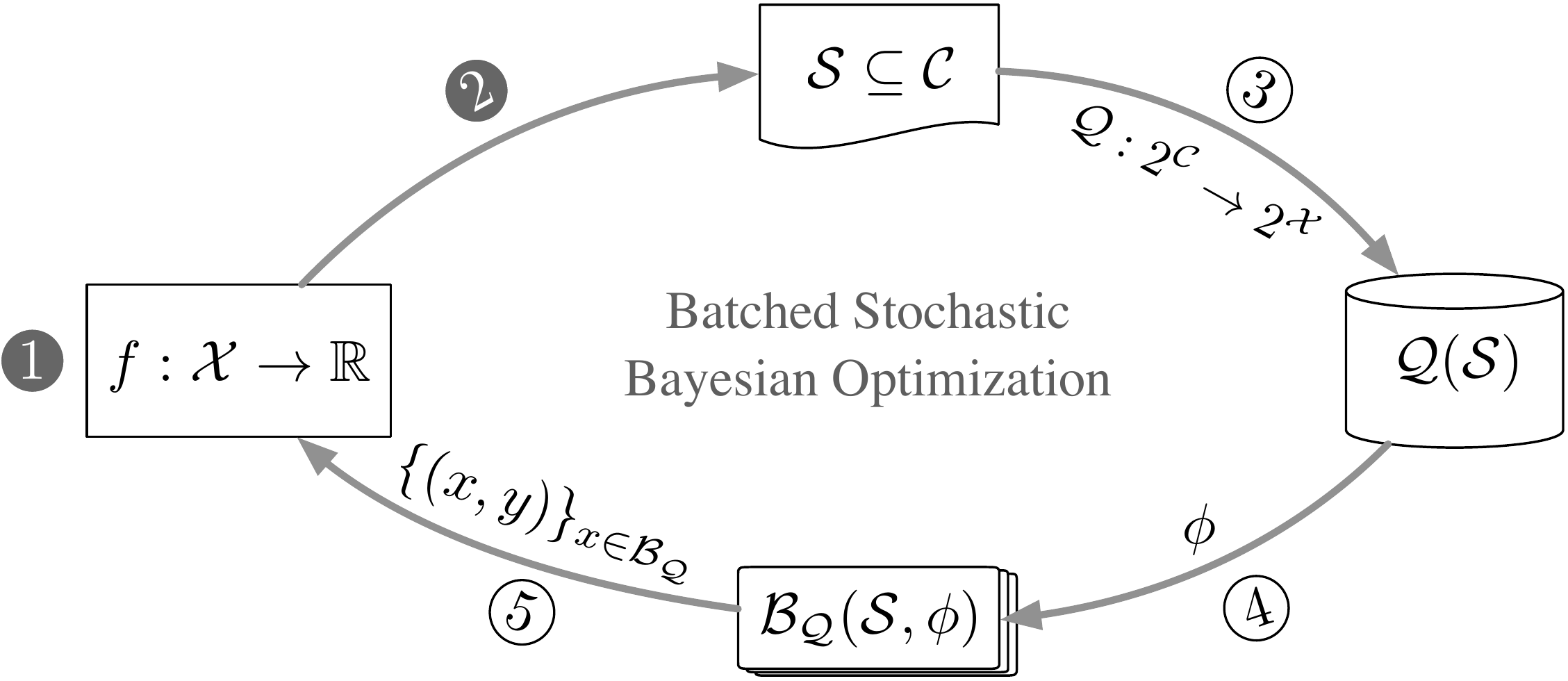}
  \caption{The batched stochastic Bayesian optimization setting. (1) Bayesian modeling (2) combinatorial constraints design; (3) candidate query generation; (4) random sampling; (5) batched queries.}
  \label{fig:bsbo-flowchart}
\end{figure}

We model $\utility$ as a sample from a Gaussian process, denoted by $\utility \sim \GP{\mu(x), \cov(\ex, \ex')}$. Suppose that we have queried $\selected \subseteq \exDom$ and received $\bobs_\selected = [\utility(\ex_i) + \noise(\ex_i)]_{\ex_i\in\selected}$ observations. We can obtain the posterior mean $\mean_\selected(\ex)$ and covariance $\cov_\selected{(\ex, \ex')}$ of the function through the covariance matrix $\Cov_\selected = [\cov(\ex_i, \ex_j) + \sigma^2 \id]_{\ex_i, \ex_j \in \selected}$ and $\bcov_\selected(\ex) = [\cov(\ex_i, \ex)]_{\ex_i \in \selected}$:
\begin{align*}
\mean_\selected(\ex) &= \mean(\ex) + \bcov_\selected(\ex)^\intercal\Cov_\selected^{-1} \bobs_\selected,\\
\cov_\selected{(\ex, \ex')} &= \cov(\ex, \ex') - \bcov_\selected(\ex)^\intercal\Cov_\selected^{-1} \bcov_\selected(\ex').
\end{align*}

\subsection{The Objective}\label{sec:obj}
\paragraph{Simple regret} 
Our overall goal is to minimize the (expected) simple regret, defined as 
$R_t(\ex) = \utility(\ex^*) - \utility(\ex_t)$ over $T$ rounds, where $\ex^* = \argmax_{\ex\in\exDom}\utility(\ex)$ is the item of the maximum utility. In other words, we aim to  maximize the reward in order to converge to performing
as well as $\ex^*$ 
as efficiently as possible. We refer to this problem as the batched stochastic Bayesian optimization (BSBO) problem.\footnote{When the set of candidate queries $\queryPool(\constrSelected_t)$ contains only \emph{one} unique item at each round $t$, then the BSBO problem reduces to the standard Bayesian optimization problem.}

\paragraph{Acquisition function} Assume that we have a budget of querying $\batchBudget$ items for each batch of experiments and that each batch $\randBatch$ is selected by sampling uniformly from the library. At each iteration, we wish to select the constraints $\constrSelected$ that will maximize the (expected) number of improved items observed in the next stochastic batched query. If the current best item has a value $\tau$, then we seek a set of constraints $\constrSelected^* \in \argmax \batchUtil(\constrSelected)$, where:
\begin{align}
    \label{eq:batch-util}
    \batchUtil(\constrSelected) = \expctover{\randState}{\sum_{x \in \randBatch(\constrSelected)}\mathbbm{1}(\utility(\ex) > \tau)}.
\end{align}
Here, $\mathbbm{1}$ is the indicator function. This objective is intractable under the GP posterior, as the dependencies between $\utility(\ex)$ preclude a closed form. We ignore the dependencies between the utilities to arrive at the following surrogate function;
\begin{align}
  \label{eq:batch-util-approx}
  \batchUtilApprox(\constrSelected) = \sum_{\ex \in \queryPool(\constrSelected)} \rho(\ex) \left[1-\paren{1-\frac{1}{|\queryPool(\constrSelected)|} }^{\batchBudget}\right].
\end{align}
The rewards $\rho(\ex) = P(\utility(\ex) > \tau)$ can be computed for all $\ex \in \queryPool(\constrDom)$ from the GP posterior for each item using the Gaussian survival function by ignoring off-diagonal entries in the predictive posterior covariance. 
Note that this surrogate acquisition function $\batchUtilApprox$ captures the expected reward under an \emph{independence} assumption. As we will demonstrate later in \secref{sec:exp:real}, despite such an assumption, we observe a strong correlation between $\batchUtilApprox$ and $\batchUtil$ on the experimental datasets we study, which are known to have high dependencies between the rewards $\rho(x)$ of different items.

\subsection{Site-Saturation Library Design}
We now consider the site-saturation library design problem as a special case of BSBO. In site-saturation mutagenesis, the utility function $\utility(x)$ specifies the utility of a protein sequence $x$, and the constraint set $\constrDom = \bigcup_{\pid=1}^\numpos \constrDomAt{\pid}$ specifies the set of amino acids allowed at each site of the protein sequence. $\numpos$ denotes the number of sites, and $\constrDomAt{\pid}$ denotes the set of all possible amino acids\footnote{$\constrDomAt{\pid}$ is typically the 20 canonical amino acids.} at site $\pid$. We denote the set of amino acids selected for site $\pid$ by $\constrSelectedAt{\pid}$; 
hence $\constrSelected = \bigcup_{\pid=1}^{\numpos} \constrSelectedAt{\pid}$. The candidate query pool (library) $\queryPool$ consists of all possible protein sequences that can be generated w.r.t. the constraints: $\constrSelected$:
\begin{align}\label{eq:querypool}
  \queryPool\paren{\constrSelected} = \prod_{\pid=1}^\numpos \constrSelectedAt{\pid}.
\end{align}
Note that adding constraints generally \emph{increases} the number of allowed items. 

\begin{algorithm}[t]
  \nl {\bf Input}: 
  Constraints set $\constrDom = \bigcup_{\pid=1}^\numpos \constrDomAt{\pid}$; number of rounds $T$; budget on each batch $n$; GP prior on $\utility$\\ 
  \Begin{
    \nl $\selected \leftarrow \emptyset$ \\
    \tcc{iteratively select the next batch}
    \For{$t$ in ${1, \dots, T}$} 
    {
      \tcc{compute the reward matrix $M=\{f(\ex)\}$}
      \nl $M \leftarrow \algreward(\text{posterior on~} \utility, \selected)$ \label{ln:alg:ol:computereward}\\
      \nl $\constrSelected \leftarrow \algdsopt(\constrDom, M, n)$\\
      \tcc{posterior update}
      \nl $\selected \leftarrow \selected \cup \randBatch(\constrSelected, \randState)$
    }
    \nl {\bf Output}: Optimizer of $\utility$ \\
  }
  \caption{Online Batched Constraints Design via DS Optimization (\algname)}\label{alg:online-dso}
\end{algorithm}

\section{Algorithms}


\begin{algorithm}[t]
  \nl {\bf Input}: Constraints set $\constrDom = \bigcup_{\pid=1}^\numpos \constrDomAt{\pid}$; reward matrix $M=\{f(x)\}$; budget on each batch $n$\\ 
  \Begin{
    \nl Set up $\batchUtilApprox$ from the inputs $(M, n)$ \\
    \tcc{Decompose $\batchUtilApprox$ into diff of submod funcs.}
    \nl $\dsf, \dsg \leftarrow \algdssa(\batchUtilApprox, \constrDom)$ \\
    (or $\dsf, \dsg \leftarrow \algdsdc(\batchUtilApprox, \constrDom)$)\\
    \tcc{Initiliaze the starting position}
    \nl $\constrSelected_{\text{cand}} \leftarrow \emptyset$ \\
    \nl $\constrSelected \leftarrow \alginit(\constrDom)$\\
    \tcc{Optimize $\dsf-\dsg$ using \algmodmod or \algsupsub.}
    \While{$\constrSelected$ not converged} 
    {
      \tcc{Keep track of local search solutions}
      \nl $\constrSelected_{\text{cand}} \leftarrow \constrSelected_{\text{cand}} \cup \alglocalsearch(\batchUtilApprox, \constrSelected, \constrDom)$ \label{ln:dsopt:locsearch} \\
      \tcc{Make a greedy move from $\constrSelected$}
      \nl $\constrSelected \leftarrow \algmodmod(\dsf-\dsg, \constrSelected, \constrDom)$ \label{ln:dsopt:modmod}\\
      (or $\constrSelected \leftarrow \algsupsub(\dsf-\dsg, \constrSelected, \constrDom)$) \label{ln:dsopt:supsub} \\
    }
    \nl $\constrSelected_{\text{cand}} \leftarrow \constrSelected_{\text{cand}} \cup \{\constrSelected\}$ \\
    \tcc{pick the best among candidate solutions}
    \nl $\constrSelected^* \leftarrow \argmin_{\constrSelected \in \constrSelected_{\text{cand}}}{\curlybracket{\batchUtilApprox(\constrSelected)}}$ \\
    \nl {\bf Output}: Set of selected constraints $\constrSelected^*$ \\
  }
  \caption{DS Optimization (\algdsopt)}\label{alg:dso}
\end{algorithm}

We now present \algonlinedso (\algref{alg:online-dso}), an  online learning framework for (online) batched stochastic Bayesian optimization. Our framework relies on a novel discrete optimization subroutine, \algdsopt, which aims to maximize the expected reward for each batched experiment. At each iteration, \algonlinedso uses a GP trained on previously-observed items to compute the reward for each item $\ex\in\queryPool(\constrDom)$ (cf., \lnref{ln:alg:ol:computereward}) and then invokes \algdsopt to select constraints. Pseudocode for \algdsopt is presented in \algref{alg:dso}. A batch of items is then sampled stochastically from the resulting library and used to update the GP. 

A key component of the DS optimization subroutine \algdsopt is a DS decomposition of the objective. Note that in general, finding a DS decomposition of an arbitrary set function requires searching through a combinatorial space and can be computationally prohibitive. As one of our main contributions, we present two polynomial time algorithms, \algdssa (\algref{alg:lb-ds}) and \algdsdc (\algref{alg:dc-ds}), for decomposing our surrogate objective. Both algorithms exploit the structure of the objective function: \algdssa decomposes the objective via submodular augmentation \citep{narasimhan2005submodular}; \algdsdc decomposes the objective via a difference of convex functions (DC) decomposition.

\subsection{DS Optimization}
After obtaining the submodular decomposition $\batchUtilApprox=-(\dsf-\dsg)$, \algdsopt (\algref{alg:dso}) proceeds to greedily optimize the DS function. For example, let us consider running the Modular-modular procedure (ModMod) \citep{iyer2012algorithms} for making a greedy move at \lnref{ln:dsopt:modmod} of \algref{alg:dso}. Since our goal is to maximize $-(\dsf-\dsg)$ (i.e., to minimize $\dsf-\dsg$), we will seek to minimize the upper bound on $\dsf-\dsg$. The ModMod procedure constructs a modular upper bound on the first submodular component, denoted by $\ubdsf \geq \dsf$, and a modular lower bound on the second submodular component, denoted by $\lbdsg \leq \dsg$. Both modular bounds are tight at the current solution $\constrSelected$: $\ubdsf(\constrSelected) = \dsf(\constrSelected)$, $\lbdsg(\constrSelected) =\dsg(\constrSelected)$. ModMod then tries to solve the following optimization problem, starting from $\constrSelected$: 
\begin{align*}
  \constrSelected^* \in \argmin_{\constrSelected}   \paren{\ubdsf(\constrSelected) - \lbdsg(\constrSelected)}.
\end{align*}
To ensure that we find a better solution, we augment the ModMod procedure with a sequence of additional local search solutions, and in the end pick the best among all. The local search procedure, \alglocalsearch (cf. \lnref{ln:dsopt:locsearch} of \algref{alg:dso}), sequentially makes greedy steps (by adding or removing a constraint from the current solution) until no further action is improving the current solution. The following theorem states that our DS optimization subroutine \algdsopt is guaranteed to find a ``good'' solution: 
\begin{theorem}[Adapted from \cite{iyer2012algorithms}]
  \algref{alg:dso} is guaranteed to find a set of constraints that achieves a local maximum of $\batchUtilApprox$.
\end{theorem}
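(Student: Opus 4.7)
The plan is to show that the set $\constrSelected^*$ returned by \algdsopt satisfies the standard notion of a local maximum for a set function, i.e.\ $\batchUtilApprox(\constrSelected^*) \ge \batchUtilApprox(\constrSelected^* \cup \{e\})$ for every $e \in \constrDom \setminus \constrSelected^*$ and $\batchUtilApprox(\constrSelected^*) \ge \batchUtilApprox(\constrSelected^* \setminus \{e\})$ for every $e \in \constrSelected^*$. The proof will reduce to inspecting the role of each component of \algref{alg:dso} and invoking the existing DS optimization guarantees of \citet{iyer2012algorithms}.

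First, I would argue that every set added to the candidate pool $\constrSelected_{\text{cand}}$ is already a local maximum of $\batchUtilApprox$. There are two sources of candidates. (i) The outputs of \alglocalsearch on \lnref{ln:dsopt:locsearch}: by construction this subroutine keeps adding or removing a single constraint as long as such a move strictly increases $\batchUtilApprox$ and halts only when no such single-element move exists, which is exactly the definition of a local maximum. Termination is immediate because $\batchUtilApprox$ is bounded and each accepted move strictly increases it on the finite lattice $2^{\constrDom}$. (ii) The final iterate of the while loop. Here I would invoke the Iyer--Bilmes result: \algmodmod (resp.\ \algsupsub) maintains a modular upper bound $\ubdsf \ge \dsf$ and a modular lower bound $\lbdsg \le \dsg$ that are tight at the current $\constrSelected$, so that any $\constrSelected'$ with $\ubdsf(\constrSelected') - \lbdsg(\constrSelected') < \ubdsf(\constrSelected) - \lbdsg(\constrSelected)$ also satisfies $\dsf(\constrSelected') - \dsg(\constrSelected') < \dsf(\constrSelected) - \dsg(\constrSelected)$. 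Consequently the sequence of iterates produces a strictly monotone sequence in $\dsf - \dsg = -\batchUtilApprox$ until convergence, at which point the current set is a local minimum of $\dsf-\dsg$, and hence a local maximum of $\batchUtilApprox$. Termination of the outer loop is guaranteed by the same monotonicity argument on a finite lattice.

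Second, I would observe that the final step $\constrSelected^* \leftarrow \argmin_{\constrSelected \in \constrSelected_{\text{cand}}} \{\batchUtilApprox(\constrSelected)\}$, interpreted as selecting the candidate that maximizes $\batchUtilApprox$ (i.e.\ minimizes the DS objective $\dsf - \dsg$; I would flag this sign convention explicitly, since maximizing $\batchUtilApprox = -(\dsf-\dsg)$ corresponds to minimizing $\dsf-\dsg$), simply picks one element of a set all of whose elements are already local maxima of $\batchUtilApprox$. Being a local maximum is a property of an individual set, not of the selection rule, so the output is itself a local maximum. This gives the claim.

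The main obstacle I anticipate is the DS-optimization guarantee for \algmodmod/\algsupsub at the end of the while loop: specifically, justifying that convergence of the greedy DS update actually corresponds to local optimality of $\dsf-\dsg$ (rather than just of the surrogate $\ubdsf-\lbdsg$). I would handle this by the standard tightness argument: since $\ubdsf$ and $\lbdsg$ are tight at $\constrSelected$ and upper/lower bounds elsewhere, $\ubdsf(\constrSelected') - \lbdsg(\constrSelected') \ge \ubdsf(\constrSelected) - \lbdsg(\constrSelected)$ for every neighbour $\constrSelected'$ implies $\dsf(\constrSelected') - \dsg(\constrSelected') \ge \dsf(\constrSelected) - \dsg(\constrSelected)$, so local optimality transfers from the surrogate to $\batchUtilApprox$. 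Beyond this the proof is purely bookkeeping, and I would simply cite \citet{iyer2012algorithms} for the underlying DS-optimization lemma rather than reprove it.
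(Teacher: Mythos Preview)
The paper does not actually supply a proof of this theorem; it is stated as ``Adapted from \citep{iyer2012algorithms}'' and the surrounding text simply notes that \alglocalsearch is added ``to ensure that we find a better solution.'' So there is no paper proof to compare line-by-line; I can only assess your argument on its own merits and against the implicit reasoning the text relies on.

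Your part (i) is correct and is essentially the paper's intended mechanism: every output of \alglocalsearch is, by construction, a local maximum of $\batchUtilApprox$. Your termination argument for \algmodmod/\algsupsub (strict decrease in $\dsf-\dsg$ whenever the surrogate decreases) is also correct, because $\dsf(\constrSelected')-\dsg(\constrSelected') \le \ubdsf(\constrSelected')-\lbdsg(\constrSelected') < \ubdsf(\constrSelected)-\lbdsg(\constrSelected) = \dsf(\constrSelected)-\dsg(\constrSelected)$.

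The gap is in part (ii), precisely where you anticipated trouble. Your ``tightness argument'' runs the inequality the wrong way. From $\ubdsf\ge \dsf$ and $\lbdsg\le \dsg$ you get $\ubdsf(\constrSelected')-\lbdsg(\constrSelected') \ge \dsf(\constrSelected')-\dsg(\constrSelected')$; hence knowing that the \emph{surrogate} cannot be improved at any neighbour $\constrSelected'$ tells you nothing about whether $\dsf-\dsg$ can be improved there. Being stuck on $\ubdsf-\lbdsg$ does \emph{not} imply being stuck on $\dsf-\dsg$. So the \algmodmod fixed point $\constrSelected$ need not itself be a local optimum of $\batchUtilApprox$, and your claim that ``all elements of $\constrSelected_{\text{cand}}$ are local maxima'' is unjustified.

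The fix is short and matches the paper's informal justification. At convergence $\constrSelected$ coincides with the previous iterate, so $\alglocalsearch(\batchUtilApprox,\constrSelected,\constrDom)$ has already been added to $\constrSelected_{\text{cand}}$; since \alglocalsearch only moves when $\batchUtilApprox$ strictly increases, its output weakly dominates $\constrSelected$ in $\batchUtilApprox$. Therefore the final $\arg\max$ over $\constrSelected_{\text{cand}}$ (your reading of the sign convention is right) can never select the raw \algmodmod iterate over its corresponding \alglocalsearch output, and the returned set is always one of the \alglocalsearch outputs, hence a local maximum. Replace your part (ii) with this domination argument and the proof goes through.
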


\begin{algorithm}[t]
  \nl {\bf Input}: Constraints set $\constrDom = \bigcup_{\pid=1}^\numpos \constrDomAt{\pid}$; surrogate objective function $\batchUtilApprox$, budget on each batch $n$; selected constraints $\constrSelected$ \\ 
  \Begin{
    \nl $v(x) \leftarrow \sqrt{x}$ for $x\in\{1, \dots, |\constrDom|\}$ \\ 
    \nl $\alpha \leftarrow v(n-2)+v(n)-2v(n-1)$ \\
    \tcc{compute $\beta'$ of Eq.\eqref{eq:dssa:beta}}
    \ForEach{$x\in\{1, \dots, |\queryPool(\constrDom)|\}$}
    {
      \nl $r_1(x) \leftarrow \paren{1-\frac{1}{s} }^{\batchBudget} - \paren{1-\frac{1}{2s} }^{\batchBudget}$ \\
      \nl $r_2(x) \leftarrow \max_{\mathcal{T}: |\mathcal{T}| \leq s} \sum_{\ex\in \mathcal{T} } f(\ex)$ \label{ln:alg:dssa:s2}
    }
    \nl $\beta' \leftarrow -\max_x{r_1(x)r_2(x)}$ \label{ln:alg:dssa_lb}\\
    \nl $\dsf_1(\constrSelected) \leftarrow \frac{|\beta'|}{\alpha} v(|\constrSelected|)$ \\
    \nl $\dsg_1(\constrSelected) \leftarrow \batchUtilApprox(\constrSelected) + \frac{|\beta'|}{\alpha} v(|\constrSelected|)$ \\
    \nl {\bf Output}: DS decomposition $\batchUtilApprox=-(\dsf_1-\dsg_1)$\\
  }
  \caption{DS Construction via Submodular Augmentation (\algdssa)}\label{alg:lb-ds}
\end{algorithm}

\subsection{DS Construction via Submodular Augmentation}\label{sec:dssa}
It is well-established that every set function can be expressed as the sum of a submodular and a supermodular function \citep{narasimhan2005submodular}. In particular, \cite{iyer2012algorithms} provide the following constructive procedure for decomposing a set function into the DS form: Given a set function $q$, one can define $\beta = \min_{\constrSelected \subseteq \constrSelected' \subseteq \constrDom\setminus j} \Delta_q(j \given \constrSelected) - \Delta_q(j \given \constrSelected' )$, where $\Delta_q(j \given \constrSelected) := q(\constrSelected \cup \{j\}) - q(\constrSelected)$ denotes the gain of adding $j$ to $\constrSelected$. When $q$ is not submodular, we know that $\beta < 0$. Now consider any strictly submodular function $p$, with $\alpha = \min_{\constrSelected \subseteq \constrSelected' \subseteq \constrDom\setminus j} \Delta_p(j \given \constrSelected) - \Delta_p(j \given \constrSelected') > 0$. Define $h(\constrSelected) = q(\constrSelected) + \frac{|\beta'|}{\alpha} p(\constrSelected)$ for any $\beta' < \beta$. It is easy to verify that $h$ is submodular since $\min_{\constrSelected \subseteq \constrSelected' \subseteq \constrDom\setminus j} \Delta_h(j \given \constrSelected) - \Delta_h(j \given \constrSelected') \geq \beta + |\beta'| \geq 0$.
Hence $q(\constrSelected) = h(\constrSelected) - \frac{|\beta|}{\alpha} p(\constrSelected)$ is a difference between two submodular functions.

We refer to the above decomposition strategy as \algdssa (where \textsf{SA} stands for ``submodular augmentation''), and present the pseudo code in \algref{alg:lb-ds}. As suggested in
\citep{iyer2012algorithms}, we choose the submodular augmentation function $p(\constrSelected) = v(|\constrSelected|)$, where $v(x)$ is a concave function, and therefore $\alpha = \min_{x\leq x'; x,x' \subseteq \integers} v(x+1) - v(x) - v(x'+1) - v(x')$. This leads to the following decomposition 
of our surrogate objective $\batchUtilApprox$:
\begin{align}
  \label{eq:ds-sa}
  &\batchUtilApprox(\constrSelected)
  = \underbrace{\paren{\batchUtilApprox(\constrSelected) + \frac{|\beta'|}{\alpha} v(|\constrSelected|)}}_{\dsg_1(\constrSelected)} - \underbrace{\frac{|\beta'|}{\alpha} v(|\constrSelected|)}_{\dsf_1(\constrSelected)},
\end{align}
where $\dsf_1$, $\dsg_1$ by construction are submodular functions, and $\beta'$ is a lower bound on $\beta$:
\begin{align}\label{eq:dssa:beta}
    \beta' \leq \beta = \min_{\constrSelected \subseteq \constrSelected' \subseteq \constrDom\setminus j} \Delta_{\batchUtilApprox} (j \given \constrSelected) - \Delta_{\batchUtilApprox} (j \given \constrSelected' ).
\end{align}
The key step of the \algdssa algorithm is to construct such a lower bound $\beta'$. The following lemma, which is proved in the Appendix, shows that one can compute $\beta'$ in polynomial time, and hence can efficiently express $\batchUtilApprox$ as a DS as defined in Eq.~\eqref{eq:ds-sa}.
\begin{lemma}\label{lm:lb-ds}
  \algref{alg:lb-ds} 
  returns a DS-decomposition of $\batchUtilApprox$ in polynomial time.
\end{lemma}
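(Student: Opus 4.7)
The plan is to establish two claims: (i) the pair $(\dsf_1, \dsg_1)$ returned by \algref{alg:lb-ds} forms a valid DS decomposition, i.e., $\batchUtilApprox = \dsg_1 - \dsf_1$ with both $\dsf_1$ and $\dsg_1$ submodular; and (ii) the algorithm terminates in time polynomial in the input size (principally $|\queryPool(\constrDom)|$ and $\batchBudget$).

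For (i), the identity $\batchUtilApprox = \dsg_1 - \dsf_1$ is immediate from the definitions of $\dsf_1$ and $\dsg_1$ in the algorithm. Submodularity of $\dsf_1(\constrSelected) = \frac{|\beta'|}{\alpha}\,v(|\constrSelected|)$ follows from the standard fact that a nonnegative concave function of cardinality yields a submodular set function (here $v=\sqrt{\cdot}$ is concave and $|\beta'|/\alpha \geq 0$). Conditional on $\beta' \leq \beta$ (both negative) --- the key inequality discussed next --- submodularity of $\dsg_1$ follows from the second-order calculation
\begin{align*}
\Delta_{\dsg_1}(j \given \constrSelected) - \Delta_{\dsg_1}(j \given \constrSelected')
&= \bigl[\Delta_{\batchUtilApprox}(j \given \constrSelected) - \Delta_{\batchUtilApprox}(j \given \constrSelected')\bigr] + \tfrac{|\beta'|}{\alpha}\bigl[\Delta_{v(|\cdot|)}(j \given \constrSelected) - \Delta_{v(|\cdot|)}(j \given \constrSelected')\bigr] \\
&\geq \beta + \tfrac{|\beta'|}{\alpha}\cdot \alpha \;=\; \beta + |\beta'| \;\geq\; 0
\end{align*}
for every chain $\constrSelected \subseteq \constrSelected' \subseteq \constrDom \setminus \{j\}$, using the definitions of $\beta$ (minimum second-order difference of $\batchUtilApprox$, per \eqnref{eq:dssa:beta}) and $\alpha$ (an analogous strictly positive quantity for the strictly concave $v(|\cdot|)$).

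The main obstacle is therefore verifying that the quantity $\beta' = -\max_x r_1(x)\,r_2(x)$ assigned at \lnref{ln:alg:dssa_lb} is in fact a valid lower bound on $\beta$. I would prove this by explicitly expanding the second-order difference of $\batchUtilApprox$ from \eqnref{eq:batch-util-approx}. Adding a constraint $j$ to site $\pid$ enlarges $\queryPool(\constrSelected)$ with the sequences carrying the new amino acid at position $\pid$ and simultaneously rescales the common sampling factor $1-(1-1/|\queryPool(\cdot)|)^{\batchBudget}$. Separating these two effects and bounding each, one upper bounds the magnitude of the second-order difference by a product of: a \emph{sampling factor} $r_1(x)$ tracking the worst-case change in $1-(1-1/s)^{\batchBudget}$ as the library size $s$ doubles (the source of the $1/(2s)$ in \algref{alg:lb-ds}), and a \emph{reward factor} $r_2(x)$ equal to the maximum total reward attainable from any $s$-subset of the library. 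Negating the maximum of this product over all relevant library sizes $x$ then yields the uniform bound $\beta' \leq \beta$.

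Polynomial runtime is then straightforward: each $r_1(x)$ costs $O(\log \batchBudget)$ via fast exponentiation; each $r_2(x)$ is a sum of the top $s$ entries of the given reward matrix $M$, computable in $O(|\queryPool(\constrDom)|\log s)$ by partial sorting; the outer loop and the final definitions of $\dsf_1,\dsg_1$ add only polynomial overhead. The real technical content lies in the second-difference expansion above; the remainder is routine bookkeeping.
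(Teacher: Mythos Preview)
Your proposal is correct and follows essentially the same route as the paper: the paper likewise reduces everything to verifying $\beta'\le\beta$ (the submodular-augmentation criterion), splits $\batchUtilApprox=g(\constrSelected)-g(\constrSelected)\,r(|\queryPool(\constrSelected)|)$, and decomposes the marginal $\Delta_2(j\mid\constrSelected)$ into a ``new-items'' piece $T_1$ (monotone in $\constrSelected$, hence discardable) and a ``rescaling'' piece $T_2$, then bounds $-T_2(\constrSelected)=-g(\constrSelected)\bigl(r(\constrSelected\cup\{j\})-r(\constrSelected)\bigr)$ by the product $r_1(s)\,r_2(s)$ using the at-most-doubling of $|\queryPool|$ and the top-$s$ reward sum---exactly the two factors you describe. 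Your runtime analysis (fast exponentiation for $r_1$, partial sort for $r_2$) also matches the paper's observation that the inner maximization is modular and hence greedily solvable.
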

\begin{algorithm}[t]
  \nl {\bf Input}: Constraints set $\prod_{\pid=1}^{\numpos} \constrDom_\pid$; budget on each batch $n$; selected constraints $\constrSelected$\\ 
  \Begin{
    \nl $u(x) \leftarrow \frac{x^2}{2}$, $\alpha \leftarrow 1$\\
    \nl $\mulfun(x) \leftarrow \paren{1-\frac{1}{x} }^{\batchBudget}$, \\
    \nl $\beta \leftarrow \abs{\min_x \mulfun''(x)}$ for $x\in \{1, \dots, |\queryPool(\constrDom)|\}$. \\
    \nl $\dsf_2(\constrSelected) \leftarrow - \paren{1+\frac{\beta}{\alpha} u(\queryPool(|\constrSelected|))} {\sum_{\ex \in \queryPool(\constrSelected)} {\utility(\ex)}}$ \\
    \nl $\dsg_2(\constrSelected) \leftarrow - \paren{\mulfun(|\queryPool(\constrSelected)|)  + \frac{\beta}{\alpha} u(\queryPool(|\constrSelected|))}  \sum_{\ex \in \queryPool(\constrSelected)} {\utility(\ex)} $ \\
     \nl {\bf Output}: DS decomposition $\batchUtilApprox=-(\dsf_2-\dsg_2)$\\
  }
  \caption{DS Construction via DC Decomposition (\algdsdc)}\label{alg:dc-ds}
\end{algorithm}
\subsection{DS Construction via DC Decomposition }\label{sec:dsdc}
We now consider an alternative strategy for decomposing the surrogate function $\batchUtilApprox$, based on a novel construction procedure that reduces to expressing a continuous function as the difference of convex (DC) functions. Concretely, we note that $\batchUtilApprox(\constrSelected)$ consists of two (multiplicative) terms: (i) a supermodular set function $\sum_{\ex \in \queryPool(\constrSelected)} \utility(\ex)$, and (ii) a set function that only depends on the cardinality of the input, i.e., $\sum_{\ex \in \queryPool(\constrSelected)} \utility(\ex) \paren{1-\paren{1-\frac{1}{|\queryPool(\constrSelected)|} }^{\batchBudget}}$.  As is further discussed in the Appendix, we show that one can exploit this structure, and focus on the DC decomposition of term (ii). We provide the detailed algorithm in \algref{alg:dc-ds}, and refer to it as \algdsdc (where \textsf{DC} stands for ``difference of convex decomposition'').

It is easy to check from \algref{alg:lb-ds} that \algdssa 
runs in quadratic time w.r.t. $|\queryPool(\constrDom)|$. 
In contrast, \algdsdc only requires finding the minimum of an array of size $|\queryPool(\constrDom)|$, which, in the best case, runs in linear time w.r.t. $|\queryPool(\constrDom)|$. 
At the end of the algorithm, \algdsdc outputs the following DS function:
\begin{align}
  \label{eq:dc-ds-construction}
  \batchUtilApprox(\constrSelected)
  &= \underbrace{\sum_{\ex \in \queryPool(\constrSelected)} \paren{-\rho(\ex)} \cdot \paren{\mulfun(|\queryPool(\constrSelected)|)  + \frac{\beta}{\alpha} u(\queryPool(|\constrSelected|))}}_{\dsg_2(\constrSelected)}
  \nonumber \\
  & \quad - \underbrace{{\sum_{\ex \in \queryPool(\constrSelected)} \paren{-\rho(\ex)}\paren{1+\frac{\beta}{\alpha} u(\queryPool(|\constrSelected|))}}}_{\dsf_2(\constrSelected)}.
\end{align}
where $u(x)$ is a non-negative, monotone convex function\footnote{In practice we often set $u(x) = \frac{x^2}{2}$ and thus $\alpha=1$.}, $\alpha = \min_x u''(x)$, $\mulfun(x) = \paren{1-\frac{1}{x} }^{\batchBudget}$, and $\beta =\abs{\min_x \mulfun''(x)}$. 
We then prove the following results:
\begin{lemma}\label{lm:dc-ds}
  With the decomposition as defined in Eq.~\eqref{eq:dc-ds-construction}, both functions $\dsf$, $\dsg$ are submodular and hence we obtain a DS-decomposition of $\batchUtilApprox$.
\end{lemma}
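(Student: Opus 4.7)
The plan is to reduce submodularity of both $\dsf_2$ and $\dsg_2$ to a single structural template: if $\phi:\nats\to\NonNegativeReals$ is non-decreasing and discretely convex, then the set function $\constrSelected \mapsto \bigl(\sum_{\ex \in \queryPool(\constrSelected)}\rho(\ex)\bigr)\,\phi\bigl(|\queryPool(\constrSelected)|\bigr)$ is supermodular on the partition-product ground set $\constrDom = \bigcup_\pid \constrDomAt{\pid}$. First I would check that the decomposition is consistent: the $\frac{\beta}{\alpha}u(|\queryPool(\constrSelected)|)$ terms cancel in $\dsg_2 - \dsf_2$, yielding $\dsg_2(\constrSelected) - \dsf_2(\constrSelected) = \bigl(\sum_{\ex\in\queryPool(\constrSelected)}\rho(\ex)\bigr)\bigl(1 - \mulfun(|\queryPool(\constrSelected)|)\bigr) = \batchUtilApprox(\constrSelected)$, matching Eq.~\eqref{eq:batch-util-approx}. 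Writing $\dsf_2(\constrSelected) = -\bigl(\sum_\ex\rho(\ex)\bigr)\phi_f(|\queryPool(\constrSelected)|)$ and $\dsg_2(\constrSelected) = -\bigl(\sum_\ex\rho(\ex)\bigr)\phi_g(|\queryPool(\constrSelected)|)$ for the scalar functions $\phi_f(m) := 1 + \frac{\beta}{\alpha}u(m)$ and $\phi_g(m) := \mulfun(m) + \frac{\beta}{\alpha}u(m)$, submodularity of both $\dsf_2$ and $\dsg_2$ reduces to supermodularity of the template for $\phi\in\{\phi_f, \phi_g\}$.

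Next I would verify that $\phi_f$ and $\phi_g$ satisfy the template's hypotheses. $\phi_f$ is non-negative, increasing, and convex because $u(m) = m^2/2$ is. For $\phi_g$, non-negativity and monotonicity follow from $\mulfun(m) = (1-1/m)^{\batchBudget}\in[0,1)$ being increasing in $m$, while discrete convexity is exactly the design choice in \algdsdc: setting $\beta = |\min_m \mulfun''(m)|$ (with $\alpha = u'' = 1$) yields $\phi_g''(m) = \mulfun''(m) + \beta \ge 0$. This is the continuous DC trick --- the term $\frac{\beta}{\alpha}u$ acts as a convex majorizer of the non-convex part of $\mulfun$.

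Finally, to prove the template I would expand $\bigl(\sum_{\ex\in\queryPool(\constrSelected)}\rho(\ex)\bigr)\phi(|\queryPool(\constrSelected)|) = \sum_\ex \rho(\ex)\indic{\ex\in\queryPool(\constrSelected)}\phi(|\queryPool(\constrSelected)|)$, and since $\rho(\ex)\ge 0$ reduce to showing each per-item function $f_\ex(\constrSelected) := \indic{\ex\in\queryPool(\constrSelected)}\phi(|\queryPool(\constrSelected)|)$ is supermodular. Fixing $A\subseteq B\subseteq\constrDom$ and $j\notin B$ at site $\pid$, I would split on the membership of $\ex$ in the four libraries $\queryPool(A), \queryPool(A\cup\{j\}), \queryPool(B), \queryPool(B\cup\{j\})$. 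The partition-product structure $\queryPool(\constrSelected) = \prod_\pid\constrSelectedAt{\pid}$ combined with $j\notin B$ rules out the ``mixed'' configuration $\ex\in\queryPool(A\cup\{j\})\setminus\queryPool(B)$, since such an $\ex$ would force $\ex_\pid = j$ and yet $\ex \in \queryPool(B)$ would require $j\in B^{(\pid)}$. The only substantive case, $\ex\in\queryPool(A)$, reduces to showing $\phi(|\queryPool(A\cup\{j\})|) - \phi(|\queryPool(A)|) \le \phi(|\queryPool(B\cup\{j\})|) - \phi(|\queryPool(B)|)$, which follows by combining supermodularity of the count $|\queryPool(\cdot)| = \prod_\pid|\constrSelectedAt{\pid}|$ (the marginal $\prod_{k\ne\pid}|A^{(k)}|$ is monotone in $A$) with the monotone-convex difference inequality $\phi(b+\Delta_B)-\phi(b)\ge\phi(a+\Delta_A)-\phi(a)$ whenever $a\le b$ and $0\le\Delta_A\le\Delta_B$.

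The main obstacle is exactly this last step: lifting the continuous convexity of the scalar $\phi$ to set-function supermodularity of $\phi\circ|\queryPool|$ despite $|\queryPool|$ being itself non-modular (in fact supermodular). This rules out a direct appeal to the textbook ``concave-composed-with-cardinality is submodular'' lemma and forces the case-analysis argument above. Once the template is in hand, Lemma~\ref{lm:dc-ds} follows by instantiating $\phi = \phi_f$ and $\phi = \phi_g$ and taking non-negative $\rho(\ex)$-weighted sums.
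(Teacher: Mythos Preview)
Your plan is correct and in one respect sharper than the paper's own argument. Both proofs hinge on the same structural fact---that a product $g(\constrSelected)\cdot\phi(|\queryPool(\constrSelected)|)$ with $g(\constrSelected)=\sum_{\ex\in\queryPool(\constrSelected)}\rho(\ex)$ is supermodular whenever $\phi$ is non-negative, non-decreasing and convex---but they arrive at it differently. The paper factors the argument through a chain of auxiliary lemmas: it first proves that $g(\constrSelected)\cdot u(|\constrSelected|)$ is supermodular for $u$ convex non-decreasing (\lemref{lm:supconvprod}), then asserts the existence of a convex scalar $w$ with $w(|\constrSelected|)=|\queryPool(\constrSelected)|$ and invokes composition lemmas (\lemref{lem:convex-composition}--\lemref{lem:dc-compfunc}) to write $\mulfun\circ w$ as a DC of non-decreasing convex functions of $|\constrSelected|$, reducing everything to \lemref{lm:supconvprod}. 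Your approach instead expands $g$ item-by-item and proves supermodularity of each $f_\ex(\constrSelected)=\indic{\ex\in\queryPool(\constrSelected)}\phi(|\queryPool(\constrSelected)|)$ by a direct case analysis, exploiting the partition-product structure of $\queryPool$ and the supermodularity of $|\queryPool(\cdot)|$ itself.

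What your route buys is that it bypasses the paper's $w$-step, which is delicate: $|\queryPool(\constrSelected)|=\prod_\pid|\constrSelectedAt{\pid}|$ is not determined by $|\constrSelected|=\sum_\pid|\constrSelectedAt{\pid}|$ alone, so writing it as $w(|\constrSelected|)$ for a fixed scalar $w$ is not literally possible in general. Your per-item argument works directly with $|\queryPool(\constrSelected)|$ and needs only that this count is monotone and supermodular, which you verify. The paper's route is more modular (it yields reusable lemmas about compositions), but yours is more elementary and self-contained.

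One small completeness remark on your case split: besides the ``substantive'' case $\ex\in\queryPool(A)$, do not forget the case $\ex\in\queryPool(A\cup\{j\})\setminus\queryPool(A)$ (equivalently $\ex_\pid=j$ with $\ex_k\in A^{(k)}$ for $k\neq\pid$). There the marginal gains are $\phi(|\queryPool(A\cup\{j\})|)$ and $\phi(|\queryPool(B\cup\{j\})|)$ respectively, so the inequality needs non-negativity and monotonicity of $\phi$ rather than convexity. You have both hypotheses in your template, so this is only a matter of recording the case.
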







\section{Experiments}
In this section, we empirically evaluate our algorithm on three datasets. We first describe the datasets, then we provide empirical justification for our surrogate objective. Finally we provide quantitative evaluation of our algorithm against a few simple greedy heuristics.  

\subsection{Datasets} 
We test our algorithms on two experimental protein-engineering datasets and a synthetic dataset designed to have multiple local minima. The synthetic dataset has two sites with $\constrDomAt{1} = \constrDomAt{2} = 26$. Values for the items in the library are constructed such that there are disjoint blocks of items with non-zero $\rho(\ex)$ separated by regions where $\rho(x) = 0$. This guarantees that there are multiple local optima in the constraint space. The experimental datasets consist of measured fitness values for every sequence in four-site SSM libraries for protein G domain B1 (GB1) \citep{wu2016adaptation}, an immunoglobulin binding protein, and the protein kinase PhoQ \citep{podgornaia2015pervasive}. These fitness landscapes are known to have high levels of multi-site epistasis. Having measurements for every fitness value in each library allows us to simulate engineering via multiple rounds of SSM. 


\subsection{Suitability of the surrogate objective}
\algname uses a GP posterior to model the unobserved utilities. However, there is no closed form for the true batch constraint design objective, which is to choose constraints $\constrSelected$ that maximize the expected number of improved observations found by querying $\queryPool(\constrSelected)$. The approximate objective function (Equation~\ref{eq:batch-util-approx}) ignores dependencies between items, and thus will overestimate the true objective. To test the suitability of the approximate objective function, we selected an initial batch of sequences from the PhoQ dataset consisting of all the single mutants plus 100 randomly-selected sequences, trained a GP regression model, and used the posterior to compute the rewards $\rho(x)$. Figure~\ref{fig:exp:comparison} shows that values for the approximate objective are well-correlated with the true objective estimated using Monte Carlo sampling. However, the independence assumption leads to overestimating the number of improved sequences that will be found. 

\begin{figure}[t]
	\centering
	\includegraphics[trim={0pt 0pt 0pt 0pt}, width=.35\textwidth]{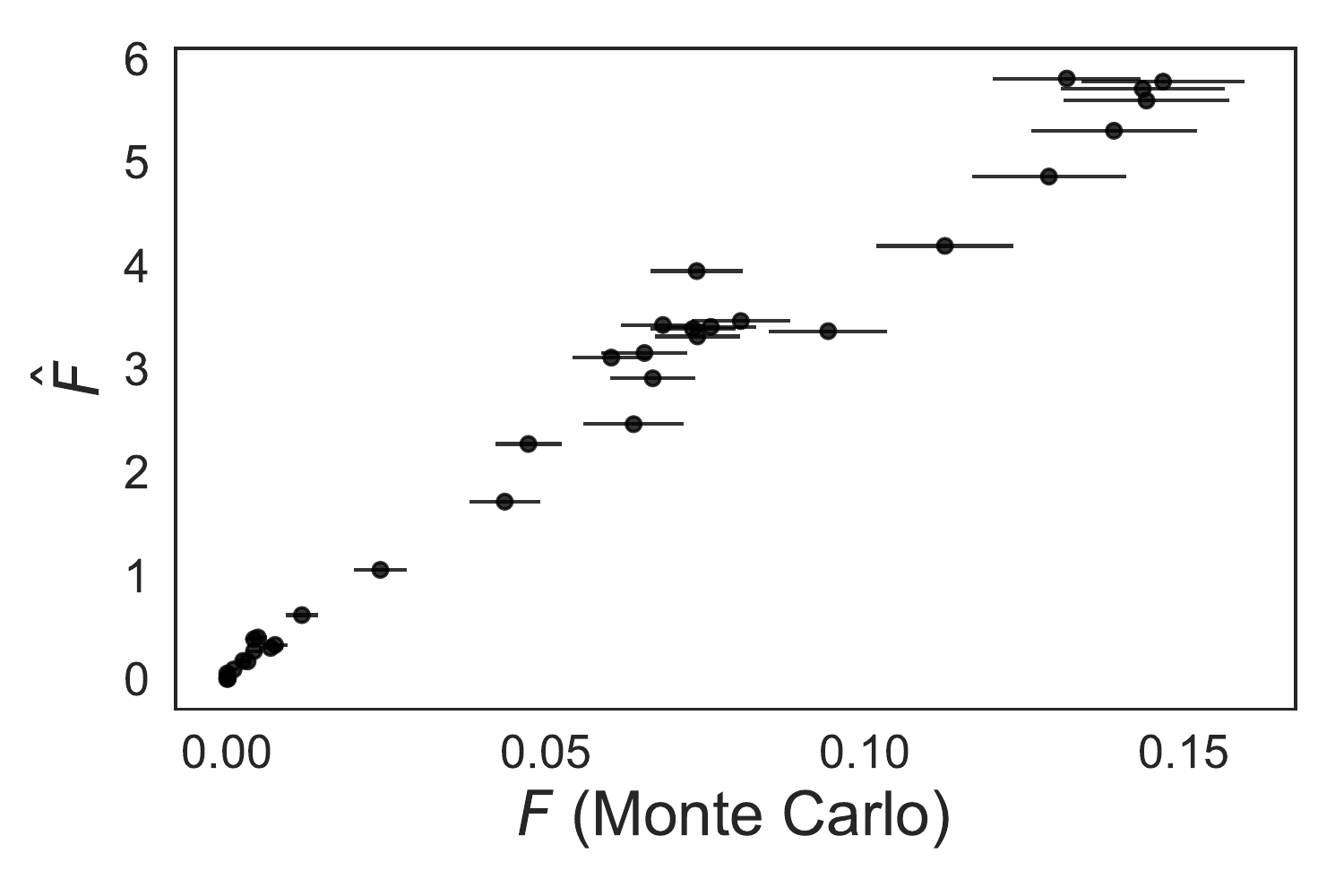}
	\caption{Comparing the full objective function approximated using Monte Carlo sampling and approximated with an independence assumption, as in Equation~\ref{eq:batch-util-approx}. Error bars are standard errors for the Monte Carlo estimates. }
	\label{fig:exp:comparison}
\end{figure}

\begin{figure}[t]
	\centering
	\includegraphics[trim={0pt 0pt 0pt 0pt}, width=.35\textwidth]{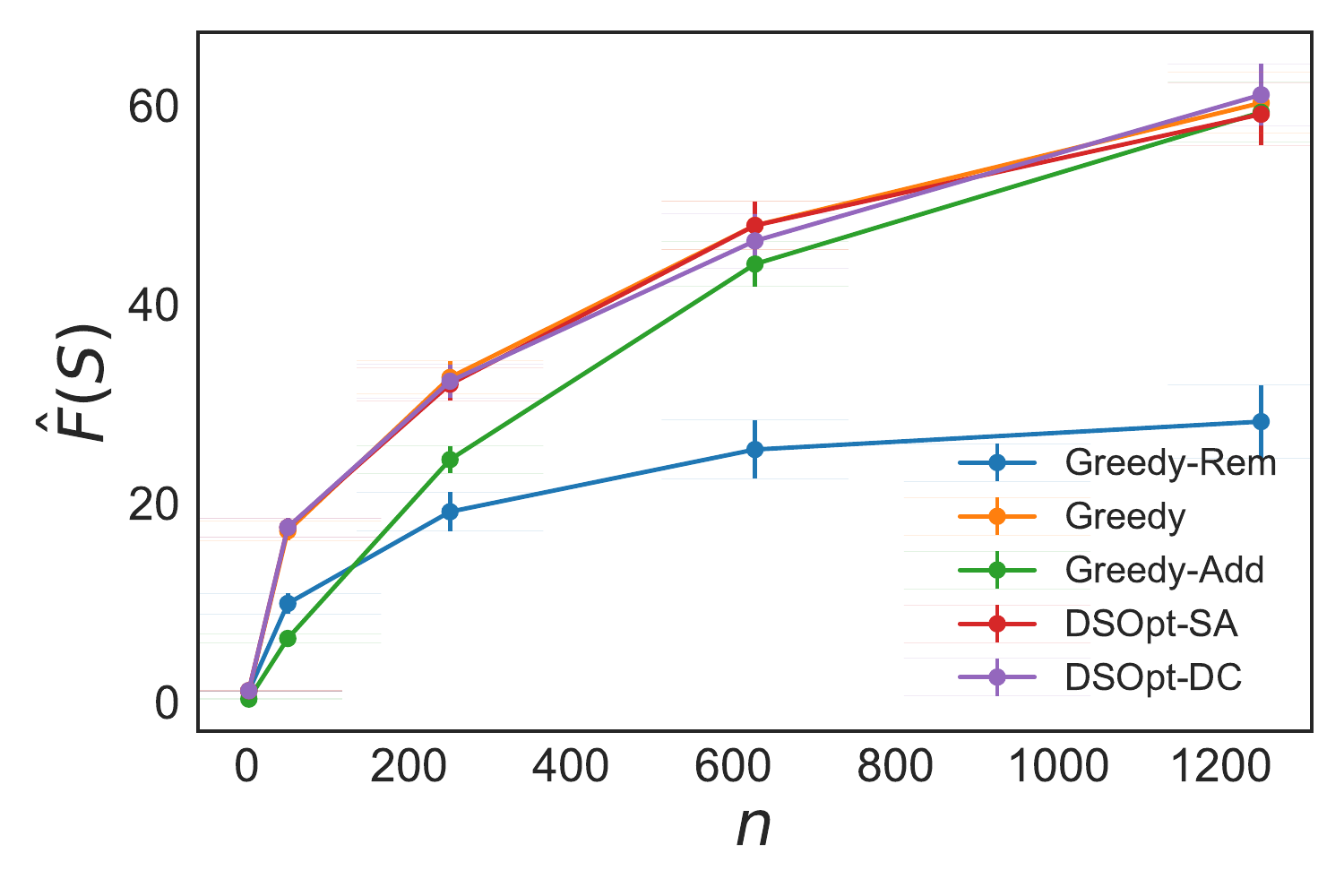}
	\caption{Performance of each algorithm on finding constraints for the synthetic dataset.  Error bars are standard errors. }
	\label{fig:exp:alg}
	\vspace{-3mm}
\end{figure}

\begin{figure*}[t]
  \centering
  \begin{subfigure}[b]{.24\textwidth}
    \centering
    {
      \includegraphics[trim={0pt 0pt 0pt 0pt}, width=\textwidth]{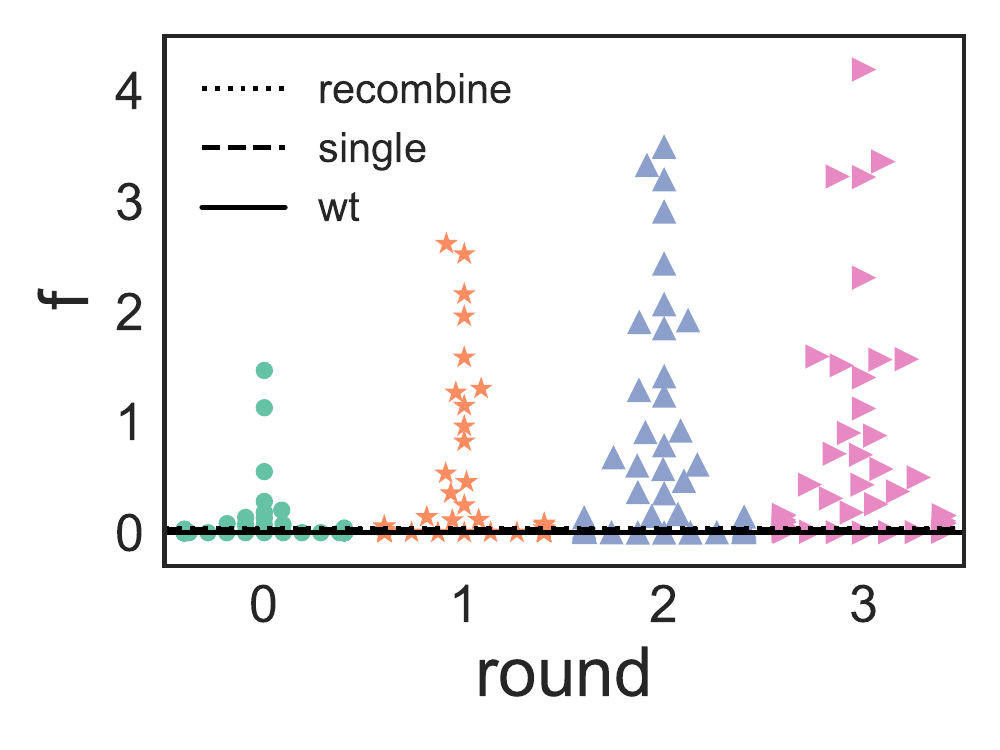}
      \caption{Simulation for GB1}
      \label{fig:gb1-batches}
    }
  \end{subfigure}
  \begin{subfigure}[b]{.24\textwidth}
    \centering
    {
      \includegraphics[trim={0pt 0pt 0pt 0pt}, width=\textwidth]{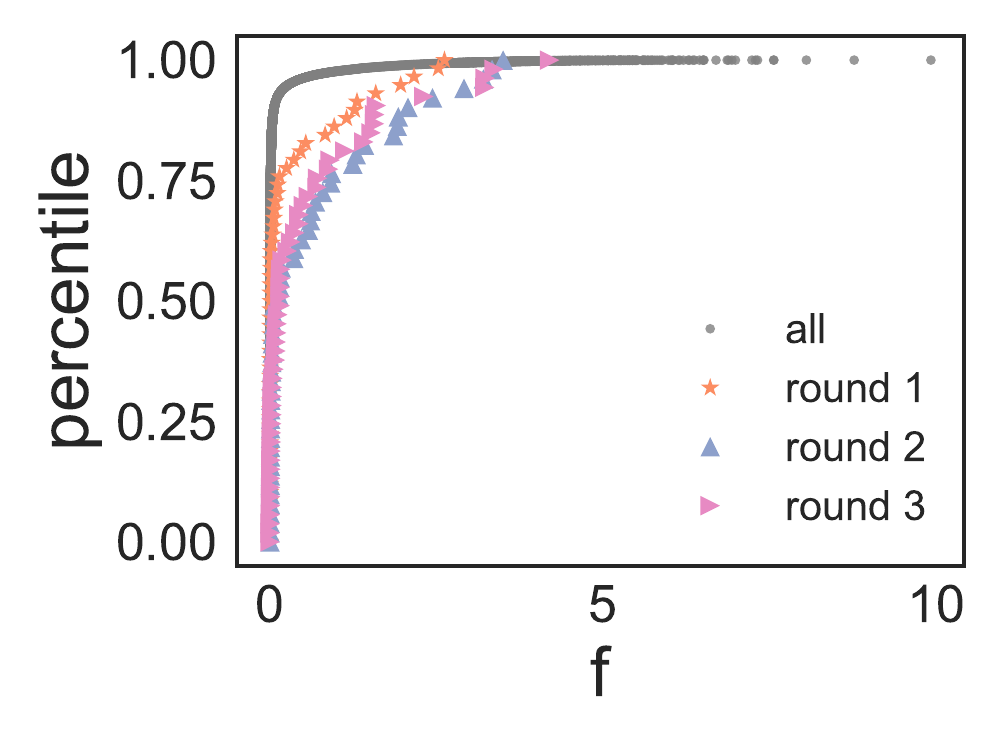}
      \caption{e-CDFs for GB1}
      \label{fig:gb1-ecdf}
    }
  \end{subfigure}
  \begin{subfigure}[b]{.24\textwidth}
    \centering
    {
      \includegraphics[trim={0pt 0pt 0pt 0pt}, width=\textwidth]{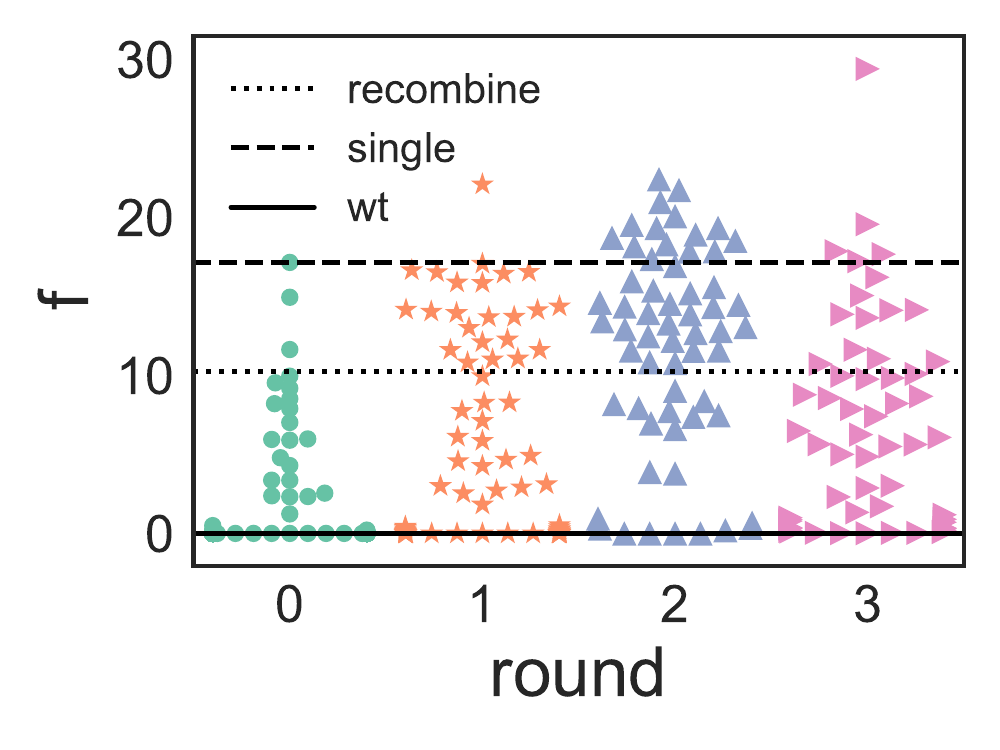}
      \caption{Simulation for PhoQ}
      \label{fig:phoq-batches}
    }
  \end{subfigure}
  \begin{subfigure}[b]{.24\textwidth}
    \centering
    {
      \includegraphics[trim={0pt 0pt 0pt 0pt}, width=\textwidth]{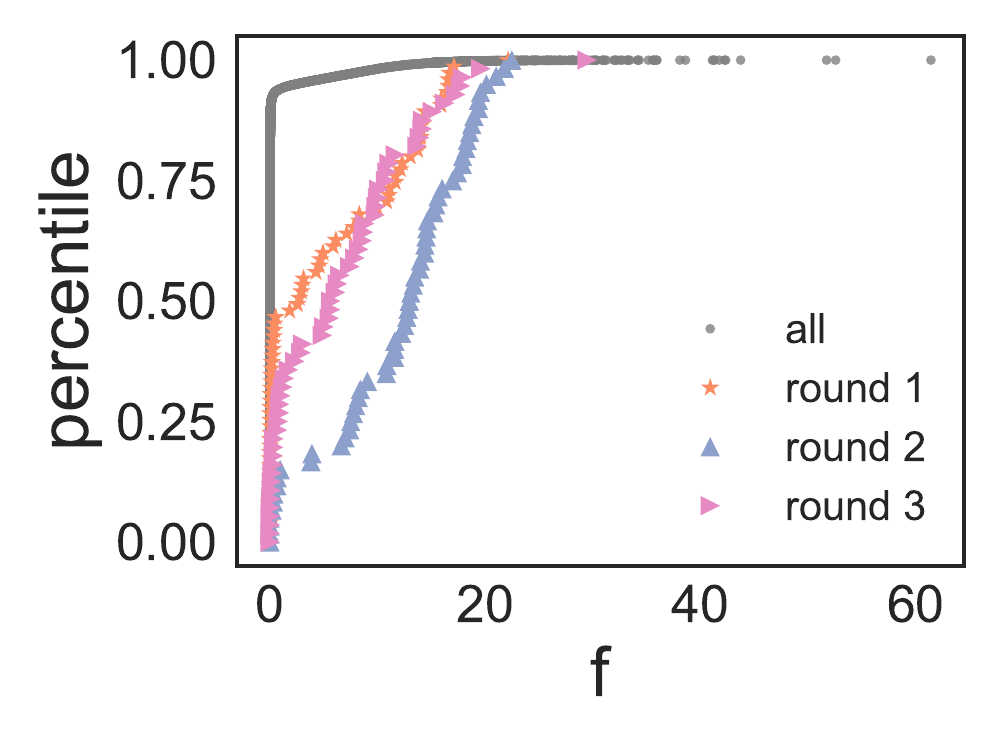}
      \caption{e-CDFs for PhoQ}
      \label{fig:phoq-ecdf}
    }
  \end{subfigure}
  \caption{Experimental results. (a) and (c) show fitness values for the sequences sampled at each round for GB1 and PhoQ, respectively. The solid horizontal lines show the fitnesses for the wild-type sequences (wt). The dashed horizontal lines show the fitnesses for the best sequences with exactly one mutation from the wild type (single). The dotted horizontal lines show the fitnessees for the sequences that combine the best amino acid at each site determined in the wild-type background. (b) and (d) show empirical cumulative distribution functions (e-CDFs) for the entire library and for each sequence selected using \algname in rounds 1 - 3 for GB1 and PhoQ, respectively.}
  \label{fig:exp:real}
\end{figure*}

\subsection{Algorithm comparisons}

Next, we compare the performance of \algname using \alglbds and \algdcds against three greedy variants using the synthetic dataset. \algadd greedily adds constraints, \algrem greedily removes constraints, and \alggreedy greedily adds or removes constraints until the objective stops improving. Because the empty set is a local optimum (adding any single constraint still results in no valid queries), it is necessary to begin the optimization at a set of constraints that yields a non-empty set of queries. 

We compare the algorithms at a range of batchsizes $n$. At each $n$, we initialize each algorithm at $\constrDom$, the constraints that result in the single best query, and 18 randomly selected sets of constraints. Figure~\ref{fig:exp:alg} shows that \alglbds, \algdcds, and \alggreedy strongly outperform \algadd and \algrem across all values of $n$. At small $n$, the optima tend to have few constraints, so \algadd performs particularly poorly. As $n$ approaches infinity, the optimum approaches the ground set $\constrDom$, and so \algrem performs particularly poorly. \alglbds, \algdcds, and \alggreedy perform very similarly across all values of $n$. In theory, \alglbds and \algdcds can escape local optima to find better solutions than \alggreedy, but this appears to be rare on this dataset. There is also no guarantee that \alglbds or \algdcds will converge to a better optimum instead of merely a different optimum. 

\subsection{Simulation on protein datasets}

We use the PhoQ and GB1 datasets to simulate the ability of \algname to select constraints that result in libraries enriched in improved sequences. For both PhoQ and GB1, we initiated the simulation by selecting an initial batch of sequences consisting of all the single mutants plus 100 randomly-selected sequences. We then ran three iterations of the algorithm with batchsize $\batchBudget=100$, resulting in three more batched queries. This simulates an SSM experiment with 3 rounds of diversification, screening, and selection. The batchsize was chosen to approximate the number of samples that fit on a 96-well plate. At each iteration, we train a GP regression model using a Mat\'ern kernel with $\nu = \frac{2}{2}$ in order to compute the rewards $\rho(\ex)$. 

For both GB1 and PhoQ, \algname finds improved sequences. Importantly, it finds much better sequences than combining the best mutation at each site in the wild-type background or the best sequence with a single mutation from the wild-type, as shown in Figure~\ref{fig:gb1-batches} and Figure~\ref{fig:phoq-batches}. These are common experimental heuristics for dealing with multi-site SSM libraries where the library is too large to reasonably screen. Figure~\ref{fig:gb1-ecdf} and Figure~\ref{fig:phoq-ecdf} show that \algname finds extremely-rare ($> 99.8^{\text{th}}$ percentile) sequences that would be extremely difficult to find by randomly sampling $< 500$ sequences from the entire library. In epistatic landscapes such as for GB1 and PhoQ, considering multiple sites simultaneously is necessary to escape local optima in the sequence-function landscape. In GB1, only looking at single mutations or recombining the best mutations at each site results in very poor fitnesses with no improvement over the wild-type. In PhoQ, the best single mutant has a higher fitness than recombining the best mutations at each site, demonstrating the importance of epistasis. \algname reduces the library size for a multi-site SSM library in order to increase the probability of finding sequences with improved fitness values.


\section{Conclusion}

In this paper, we investigated a novel Bayesian optimization problem: batched stochastic Bayesian optimization. This problem setting poses two unique challenges: optimizing over the space of constraints instead of directly over items and stochastic sampling. We proposed an effective online optimization framework for searching through the combinatorial design space of constraints in order to maximize the expected number of improved items sampled at each iteration. In particular, we proposed a novel approximate objective function that links a model trained on the individual items to the constraint space and derived two efficient DS decompositions for this objective. Our method efficiently finds sequences with improved fitnesses in fully-characterized SSM libraries for the proteins GB1 and PhoQ, demonstrating its potential to enable engineering via simultaneous SSM even in cases where it is not feasible to measure more than a tiny fraction of the sequences in the library. 

\section*{Acknowledgments} 
This work was supported in part by the Donna and Benjamin M. Rosen Bioengineering Center, the U.S. Army Research Office Institute for Collaborative Biotechnologies, NSF Award \#1645832, Northrop Grumman, Bloomberg, PIMCO, and a Swiss NSF Early Mobility Postdoctoral Fellowship.



\bibliographystyle{icml2018}
\bibliography{reference}

\iftoggle{longversion}{
\appendix
\onecolumn

\section{Proofs}

\begin{lemma}\label{lm:sum-supermodular}
  Let $g(\constrSelected)=\sum_{\ex \in \queryPool(\constrSelected)} \utility(\ex)$, where $\queryPool(\constrSelected)$ is defined in Eq. \eqref{eq:querypool}. If $\forall \ex,\ \utility(\ex) \geq 0$, then $g$ is monotone supermodular.
\end{lemma}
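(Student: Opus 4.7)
The plan is to exploit the product structure of $\queryPool(\constrSelected) = \prod_{\pid=1}^{\numpos} \constrSelectedAt{\pid}$ and give an explicit formula for the marginal gain $\Delta_g(j \given \constrSelected) := g(\constrSelected \cup \{j\}) - g(\constrSelected)$ for any $j \in \constrDom \setminus \constrSelected$. Each element $j$ of the ground set corresponds to an amino acid at a specific site, so suppose $j \in \constrDomAt{\pid}$. Adding $j$ to $\constrSelected$ replaces $\constrSelectedAt{\pid}$ with $\constrSelectedAt{\pid} \cup \{j\}$ and leaves the other $\constrSelectedAt{\varpid}$ unchanged, so the new elements that enter $\queryPool$ are exactly those sequences with amino acid $j$ at site $\pid$ and any allowed amino acid elsewhere. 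Hence
\begin{align*}
\Delta_g(j \given \constrSelected) = \sum_{\ex \in \{j\} \times \prod_{\varpid \neq \pid} \constrSelectedAt{\varpid}} \utility(\ex).
\end{align*}

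From this formula monotonicity is immediate: since $\utility(\ex) \geq 0$ by assumption, $\Delta_g(j \given \constrSelected) \geq 0$ for every $\constrSelected$ and every $j \notin \constrSelected$, so $g$ is monotone.

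For supermodularity I would verify the marginal-gain characterization: $g$ is supermodular iff $\Delta_g(j \given \constrSelected) \leq \Delta_g(j \given \constrSelected')$ whenever $\constrSelected \subseteq \constrSelected' \subseteq \constrDom \setminus \{j\}$. Using the formula above, both gains are sums of $\utility(\ex) \geq 0$ over a product set, and the product set $\{j\} \times \prod_{\varpid \neq \pid} \constrSelectedAt{\varpid}$ is contained in $\{j\} \times \prod_{\varpid \neq \pid} \constrSelectedAt{\varpid\prime}$ because $\constrSelectedAt{\varpid} \subseteq \constrSelectedAt{\varpid\prime}$ for each $\varpid$. Since we are summing non-negative terms over a larger set, the inequality follows.

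The only real subtlety is bookkeeping: making sure $j$ corresponds to a single site $\pid$ so the product factorization across sites is preserved when $j$ is added, and being explicit that $j \in \constrDom \setminus \constrSelected'$ so that $j$ is genuinely new in both marginal gains. No further estimates are needed — the proof is essentially a direct consequence of the product form of $\queryPool$ and the non-negativity of $\utility$.
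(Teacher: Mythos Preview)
Your proposal is correct and follows essentially the same approach as the paper: both compute the marginal gain explicitly as $\Delta_g(j \given \constrSelected) = \sum_{\ex \in \{j\} \times \prod_{\varpid \neq \pid} \constrSelectedAt{\varpid}} \utility(\ex)$ using the product structure of $\queryPool$, and then deduce supermodularity from the fact that this index set grows with $\constrSelected$. Your containment argument for the supermodularity step is in fact slightly cleaner than the paper's, which writes the difference as a sum over $\{j\} \times \prod_{\varpid \neq \pid} (\constrSelected'^{(\varpid)} \setminus \constrSelectedAt{\varpid})$ --- an expression that undercounts the actual set difference but still suffices for the non-negativity conclusion.
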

\begin{proof}
  Let $\pid \in [\numpos]$, and $j\in \constrDomAt{\pid}$ be any constraint at site $\pid$. For $\constrSelected \subseteq \constrDom \setminus \{j\}$, define $\Delta_g(j \given \constrSelected ) = \sum_{\ex \in \queryPool(\constrSelected \cup \{j\})} \utility(\ex) - \sum_{\ex \in \queryPool(\constrSelected)} \utility(\ex) $ to be the gain of adding $j$ to the set $\constrSelected$.

  By definition of $\queryPool(\constrSelected)$, we have $\queryPool(\constrSelected) = {\prod_{\varpid=1}^{\numpos} \constrSelectedAt{\varpid}}$, and
  \begin{align}
    \queryPool(\constrSelected \cup \{j\})
    &= \paren{\constrSelectedAt{\pid} \cup \{j\}} \times \prod_{\varpid\neq \pid} \constrSelectedAt{\varpid} \nonumber \\
    &= \paren{{\{j\}} \times \prod_{\varpid\neq \pid} \constrSelectedAt{\varpid}} \bigcup  \paren{\constrSelectedAt{\pid} \times \prod_{\varpid\neq \pid} \constrSelectedAt{\varpid}} \nonumber \\
    &= \paren{{\{j\}} \times \prod_{\varpid\neq \pid} \constrSelectedAt{\varpid}} \bigcup  \paren{\prod_{\varpid=1}^{\numpos} \constrSelectedAt{\varpid}} \label{eq:querypool-plusj}
  \end{align}
  Then,
  \begin{align*}
    \Delta_g(j \given \constrSelected )
    = \sum_{\ex \in \queryPool(\constrSelected \cup \{j\})} \utility(\ex) - \sum_{\ex \in \queryPool(\constrSelected)} \utility(\ex)
    \stackrel{Eq.~\eqref{eq:querypool-plusj}}{=} \sum_{\ex \in {\{j\}} \times \prod_{\varpid\neq \pid} \constrSelectedAt{\varpid}}  \utility(\ex)
  \end{align*}
  Now let us consider $\constrSelected'$ such that $\constrSelected \subseteq \constrSelected' \subseteq \constrDom \setminus \{j\}$. Clearly $\forall \varpid\in [\numpos],\ \constrSelected^{\paren\varpid} \subseteq \constrSelected'^{\paren\varpid}$. 
  Therefore, $\Delta_g(j \given \constrSelected' ) - \Delta_g(j \given \constrSelected )
  = \sum_{\ex \in {\{j\}} \times \prod_{\varpid\neq \pid} \paren{\constrSelected'^{\paren\varpid} \setminus \constrSelectedAt{\varpid}}}  \utility(\ex)
  \geq 0$
  and hence $g$ is supermodular.
\end{proof}


\subsection{Proof of \lemref{lm:lb-ds}}
We now show that \algref{alg:lb-ds} leads to a polynomial algorithm for constructing a lower bound on Eq.~\eqref{eq:dssa:beta}, and hence on constructing a DS-decomposition of the surrogate objective function $\batchUtilApprox$ (Eq.~\eqref{eq:batch-util-approx}).

\begin{proof}[Proof of \lemref{lm:lb-ds}]
  Let $g(\constrSelected)=\sum_{\ex \in \queryPool(\constrSelected)} \utility(\ex)$. By definition we have
  \begin{align*}
    \batchUtilApprox(\constrSelected)
    = g(\constrSelected) \paren{1-\paren{1-\frac{1}{|\queryPool(\constrSelected)|} }^{\batchBudget} }
    = \underbrace{g(\constrSelected)}_{\batchUtilApprox_1(\constrSelected)} - \underbrace{g(\constrSelected){\paren{1-\frac{1}{|\queryPool(\constrSelected)|} }^{\batchBudget} }}_{\batchUtilApprox_2(\constrSelected)}
    = \batchUtilApprox_1(\constrSelected) - \batchUtilApprox_2(\constrSelected)
  \end{align*}

 We know from \lemref{lm:sum-supermodular} that $\batchUtilApprox_1$ is supermodular. Let $j \in \constrDom$ and $\constrSelected \subseteq \constrDom \setminus \{j\}$. The gain of $j$ on $\batchUtilApprox_1$, denote by $\Delta_1(j\given \constrSelected)$, is monotone decreasing. 
 
 Let $\Delta_2(j\given \constrSelected)
    = \batchUtilApprox_2(\constrSelected \cup \{j\}) - \batchUtilApprox_2(\constrSelected)$. Our goal is to find a lower bound on 
 \begin{align}
 \beta &= \min_{\constrSelected \subseteq \constrSelected' \subseteq \constrDom\setminus j} \paren{\Delta_{\batchUtilApprox} (j \given \constrSelected) - \Delta_{\batchUtilApprox} (j \given \constrSelected')} \nonumber \\
 &=  \min_{\constrSelected \subseteq \constrSelected' \subseteq \constrDom\setminus j} \paren{\underbrace{\Delta_{1} (j \given \constrSelected) - \Delta_{1} (j \given \constrSelected')}_{\geq 0} + \Delta_{2} (j \given \constrSelected) - \Delta_{2} (j \given \constrSelected')}
 \end{align}
 Therefore, it suffices to find a lower bound $\Delta_{2} (j \given \constrSelected) - \Delta_{2} (j \given \constrSelected')$.
The gain of $j$ on $\batchUtilApprox_2$ is 
  \begin{align*}
    \Delta_2(j\given \constrSelected)
    &= \batchUtilApprox_2(\constrSelected \cup \{j\}) - \batchUtilApprox_2(\constrSelected) \\
      &= \sum_{\ex \in \queryPool(\constrSelected \cup \{j\})} f(\ex) {\paren{1-\frac{1}{|\queryPool(\constrSelected \cup \{j\})|} }^{\batchBudget} } - \sum_{\ex \in \queryPool(\constrSelected)} f(\ex) {\paren{1-\frac{1}{|\queryPool(\constrSelected)|} }^{\batchBudget} } \\
    &= \sum_{\ex \in \queryPool(\constrSelected \cup \{j\}) \setminus \queryPool(\constrSelected)} f(\ex) {\paren{1-\frac{1}{|\queryPool(\constrSelected \cup \{j\})|} }^{\batchBudget} } + \\
    &\qquad \qquad \sum_{\ex \in \queryPool(\constrSelected)} f(\ex) \paren{{\paren{1-\frac{1}{|\queryPool(\constrSelected \cup \{j\})|} }^{\batchBudget} }-{\paren{1-\frac{1}{|\queryPool(\constrSelected)|} }^{\batchBudget} }}
  \end{align*}
  Let $r(\constrSelected) = \paren{1-\frac{1}{|\queryPool(\constrSelected)|} }^{\batchBudget}$. Then, the above equation can be simplified as
  \begin{align*}
    \Delta_2(j\given \constrSelected)
    &= \batchUtilApprox_2(\constrSelected \cup \{j\}) - \batchUtilApprox_2(\constrSelected) \\
      &= \underbrace{\sum_{\ex \in \queryPool(\constrSelected \cup \{j\}) \setminus \queryPool(\constrSelected)} f(\ex) r(\constrSelected \cup \{j\})}_{T_1(\constrSelected)} + \underbrace{\sum_{\ex \in \queryPool(\constrSelected)} f(\ex) \paren{r(\constrSelected \cup \{j\})-r(\constrSelected)}}_{T_2(\constrSelected)}
  \end{align*}
  It is easy to verify that $T_1(\constrSelected)$ is monotone increasing function of $\constrSelected$.   Let us consider $\constrSelected'$ such that $\constrSelected \subseteq \constrSelected' \subseteq \constrDom \setminus \{j\}$. We have
  \begin{align}
    \Delta_2(j\given \constrSelected') - \Delta_2(j\given \constrSelected)
    &\stackrel{}{\geq} T_2(\constrSelected') - T_2(\constrSelected) \nonumber \\
    &\stackrel{T_2\geq 0}{\geq} -g(\constrSelected)(r(\constrSelected \cup \{j\}) - r(\constrSelected))\nonumber
  \end{align}
  Therefore, it suffices to find a lower bound on $-g(\constrSelected)(r(\constrSelected \cup \{j\}) - r(\constrSelected))$. Further notice that
  \begin{align}
    \label{eq:app:sa:lb:term1}
    0 \leq g(\constrSelected) \leq \max_{\mathcal{T}: |\mathcal{T}| \leq |\queryPool(\constrSelected)|} \sum_{\ex\in \mathcal{T} } f(\ex)
  \end{align}
  and it is not hard to verify that
    \begin{align}
    \label{eq:app:sa:lb:term2}
    0\leq r(\constrSelected \cup \{j\}) - r(\constrSelected) \leq \paren{1-\frac{1}{|\queryPool(\constrSelected)|} }^{\batchBudget} - \paren{1-\frac{1}{2|\queryPool(\constrSelected)|} }^{\batchBudget}
  \end{align}
Therefore, combining term \eqref{eq:app:sa:lb:term1} with \eqref{eq:app:sa:lb:term2}, we get a lower bound on $\beta$:
\begin{align}
    \beta \geq - \max_{s\in \{1, \dots, |\queryPool(\constrDom)|\}} \paren{\paren{\paren{1-\frac{1}{s} }^{\batchBudget} - \paren{1-\frac{1}{2s} }^{\batchBudget}} \underbrace{\max_{\mathcal{T}: |\mathcal{T}| \leq s} \sum_{\ex\in \mathcal{T} } f(\ex)}_{\text{Term 2}}}\label{eq:app:beta:lowerbound}
\end{align}
Note that term 2 is a modular function and can be optimized greedily. Therefore, computing the RHS of Eq.~\ref{eq:app:beta:lowerbound} can be efficiently done in polynomial time w.r.t. $|\queryPool(\constrDom)|$.
\end{proof}


\subsection{Proof of \lemref{lm:dc-ds}: Difference of Convex Construction of DS Decomposition}

\begin{lemma}\label{lm:supconvprod}
  Let $g: 2^\constrDom \rightarrow \NonNegativeReals$ be a non-negative, non-decreasing supermodular function, and $\mulconvfun: \reals \rightarrow \reals$ be a non-decreasing convex function. For $\constrSelected \subseteq \constrDom$, define $h(\constrSelected) = g(\constrSelected) \cdot \mulconvfun(|\constrSelected|)$. Then $h$ is supermodular.
\end{lemma}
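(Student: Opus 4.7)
The plan is to verify supermodularity of $h$ directly from the marginal-gain characterization: for every $\constrSelected \subseteq \constrSelected' \subseteq \constrDom$ with $j \notin \constrSelected'$, I need to show $\Delta_h(j\given\constrSelected) \leq \Delta_h(j\given\constrSelected')$, where $\Delta_h(j\given\constrSelected) := h(\constrSelected\cup\{j\}) - h(\constrSelected)$. My first move is a ``product rule'' rewriting of $\Delta_h$: by adding and subtracting $g(\constrSelected)\,\mulconvfun(|\constrSelected|+1)$, I obtain
\[
\Delta_h(j\given\constrSelected) = \mulconvfun(|\constrSelected|+1)\,\Delta_g(j\given\constrSelected) + g(\constrSelected)\bigl[\mulconvfun(|\constrSelected|+1) - \mulconvfun(|\constrSelected|)\bigr].
\]

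With this identity, I would split $\Delta_h(j\given\constrSelected') - \Delta_h(j\given\constrSelected)$ into the two natural pieces
\begin{align*}
\text{(I)}\;\; & \mulconvfun(|\constrSelected'|+1)\,\Delta_g(j\given\constrSelected') - \mulconvfun(|\constrSelected|+1)\,\Delta_g(j\given\constrSelected), \\
\text{(II)}\;\; & g(\constrSelected')\bigl[\mulconvfun(|\constrSelected'|+1)-\mulconvfun(|\constrSelected'|)\bigr] - g(\constrSelected)\bigl[\mulconvfun(|\constrSelected|+1)-\mulconvfun(|\constrSelected|)\bigr],
\end{align*}
and argue each is non-negative. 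For (II), monotonicity of $g$ gives $g(\constrSelected') \geq g(\constrSelected) \geq 0$, and convexity of $\mulconvfun$ on consecutive integers gives the discrete-derivative comparison $\mulconvfun(|\constrSelected'|+1) - \mulconvfun(|\constrSelected'|) \geq \mulconvfun(|\constrSelected|+1) - \mulconvfun(|\constrSelected|) \geq 0$ (the last bound using that $\mulconvfun$ is non-decreasing), so the bigger non-negative weight is paired with the bigger non-negative gap and piece (II) is non-negative. For (I), supermodularity and monotonicity of $g$ give $\Delta_g(j\given\constrSelected') \geq \Delta_g(j\given\constrSelected) \geq 0$, and non-decreasingness of $\mulconvfun$ with $|\constrSelected'|\geq|\constrSelected|$ gives $\mulconvfun(|\constrSelected'|+1) \geq \mulconvfun(|\constrSelected|+1)$; together these make piece (I) non-negative.

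Summing the two pieces yields $\Delta_h(j\given\constrSelected') - \Delta_h(j\given\constrSelected) \geq 0$, which is the supermodularity of $h$. The main obstacle is the ``larger times larger'' comparison used in piece (I), which implicitly requires $\mulconvfun(|\constrSelected|+1) \geq 0$; since $\mulconvfun$ is only ever evaluated at the non-negative integer values $\{0,1,\dots,|\constrDom|+1\}$, and the concrete choice $\mulconvfun(x)=x^2/2$ used by \algdsdc is non-negative on this range, the hypothesis is automatic in the setting where the lemma is applied. All other steps are routine bookkeeping of inequalities bundled from the three hypotheses on $g$ and $\mulconvfun$, with no additional technical machinery required.
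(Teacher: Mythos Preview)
Your proof is correct and follows essentially the same route as the paper's: the paper also uses the product-rule identity
\[
\Delta_h(j\mid\constrSelected)=\bigl(g(\constrSelected\cup\{j\})-g(\constrSelected)\bigr)\,\mulconvfun(|\constrSelected|+1)+g(\constrSelected)\bigl(\mulconvfun(|\constrSelected|+1)-\mulconvfun(|\constrSelected|)\bigr)
\]
and bounds the two pieces separately with exactly your ingredients (supermodularity and monotonicity of $g$ for the first, non-negativity and monotonicity of $g$ together with convexity of $\mulconvfun$ for the second). The only noteworthy difference is that you explicitly flag the implicit hypothesis $\mulconvfun(|\constrSelected|+1)\geq 0$ needed for the ``larger times larger'' step in piece~(I); the paper's proof uses the same inequality without comment, so your observation that the lemma as stated tacitly relies on non-negativity of $\mulconvfun$ at the relevant integers---which holds for the functions $1+\tfrac{\beta}{\alpha}\,u\circ w$ and $v\circ w$ actually used downstream---is a genuine clarification rather than a deviation.
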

\begin{proof}
  Let $j \in \constrDom$ and $\constrSelected \subseteq \constrDom \setminus \{j\}$. The gain of $j$ is
  \begin{align*}
    \Delta_h(j\given \constrSelected)
    &= h(\constrSelected \cup \{j\}) - h(\constrSelected) \\
    &= g(\constrSelected \cup \{j\}) \cdot \mulconvfun(|\constrSelected \cup \{j\}|) - g(\constrSelected) \cdot \mulconvfun(|\constrSelected|) \\
    &= \paren{g(\constrSelected \cup \{j\}) - g(\constrSelected)} \cdot \mulconvfun(|\constrSelected \cup \{j\}|) + g(\constrSelected) \paren{\mulconvfun(|\constrSelected \cup \{j\}|) -  \mulconvfun(|\constrSelected|)}
  \end{align*}
  Let us consider $\constrSelected'$ such that $\constrSelected \subseteq \constrSelected' \subseteq \constrDom \setminus \{j\}$. We have
  \begin{align*}
    \Delta_h(j\given \constrSelected)
    &= \paren{g(\constrSelected \cup \{j\}) - g(\constrSelected)} \cdot \mulconvfun(|\constrSelected \cup \{j\}|) + g(\constrSelected) \paren{\mulconvfun(|\constrSelected \cup \{j\}|) -  \mulconvfun(|\constrSelected|)}\\
    &\stackrel{(a)}{\leq} \paren{g(\constrSelected' \cup \{j\}) - g(\constrSelected')} \cdot \mulconvfun(|\constrSelected' \cup \{j\}|) + g(\constrSelected) \paren{\mulconvfun(|\constrSelected \cup \{j\}|) -  \mulconvfun(|\constrSelected|)}\\
    &\stackrel{(b)}{\leq} \paren{g(\constrSelected' \cup \{j\}) - g(\constrSelected')} \cdot \mulconvfun(|\constrSelected' \cup \{j\}|) + g(\constrSelected') \paren{\mulconvfun(|\constrSelected' \cup \{j\}|) -  \mulconvfun(|\constrSelected'|)}\\
    &= \Delta_h(j\given \constrSelected')
  \end{align*}
  where step (a) is due to  $g$ being monotone supermodular (i.e., $g(\constrSelected' \cup \{j\}) - g(\constrSelected') \geq g(\constrSelected \cup \{j\}) - g(\constrSelected) \geq 0$) and $u$ being monotone (i.e., $\mulconvfun(|\constrSelected' \cup \{j\}|) \geq \mulconvfun(|\constrSelected \cup \{j\}|)$); step (b) is due to $g$ being non-negative monotone (i.e., $g(\constrSelected') \geq g(\constrSelected) \geq 0$) and $u$ being convex (i.e., $\mulconvfun(|\constrSelected' \cup \{j\}|) -  \mulconvfun(|\constrSelected'|) \geq \mulconvfun(|\constrSelected \cup \{j\}|) -  \mulconvfun(|\constrSelected|)$). Therefore $h$ is supermodular.
\end{proof}
\begin{lemma}\label{lem:convex-composition}
  Let $w: \reals \rightarrow \reals$ be a convex function and $u: \reals \rightarrow \reals$ a convex non-decreasing function, then $u \circ w$ is convex. Furthermore, if $w$ is non-decreasing, then the composition is also non-decreasing.
\end{lemma}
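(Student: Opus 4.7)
The plan is to prove both claims directly from the definitions, chaining one inequality from the convexity of $w$ with a second inequality from the convexity of $u$, and using the non-decreasing property of $u$ to pass the first inequality through the outer function.

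First, I would fix arbitrary $x, y \in \reals$ and $\lambda \in [0,1]$ and establish the convexity of $u \circ w$ via a two-step argument. Applying convexity of $w$ gives $w(\lambda x + (1-\lambda) y) \le \lambda w(x) + (1-\lambda) w(y)$. Since $u$ is non-decreasing, applying $u$ to both sides preserves the inequality, yielding
\begin{align*}
u\bigl(w(\lambda x + (1-\lambda) y)\bigr) \le u\bigl(\lambda w(x) + (1-\lambda) w(y)\bigr).
\end{align*}
Then applying convexity of $u$ to the right-hand side gives
\begin{align*}
u\bigl(\lambda w(x) + (1-\lambda) w(y)\bigr) \le \lambda u(w(x)) + (1-\lambda) u(w(y)),
\end{align*}
and chaining the two inequalities yields $(u\circ w)(\lambda x + (1-\lambda) y) \le \lambda (u\circ w)(x) + (1-\lambda)(u\circ w)(y)$, which is the definition of convexity for $u \circ w$.

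For the second claim, I would note that if $w$ is additionally non-decreasing, then for any $x \le y$ we have $w(x) \le w(y)$, and applying the non-decreasing $u$ preserves the inequality to give $(u \circ w)(x) \le (u \circ w)(y)$, so $u \circ w$ is non-decreasing. There is no real obstacle here: the lemma is a textbook composition fact, and the only subtlety worth flagging is the necessity of the monotonicity hypothesis on $u$ in the first claim — without it, the step that passes $u$ across the inner convexity inequality would fail, so I would make sure to call out explicitly where each hypothesis ($w$ convex, $u$ convex, $u$ non-decreasing) is used in the chain.
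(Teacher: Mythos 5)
Your proof is correct and follows exactly the same two-step chain as the paper's: apply convexity of $w$, push the inequality through the non-decreasing $u$, then apply convexity of $u$, with the same direct argument for monotonicity of the composition. No differences worth noting.
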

\begin{proof}
  By convexity of $w$:
  \begin{align*}
    w(\alpha x + (1-\alpha) y) \leq \alpha w(x) + (1-\alpha) w(y).
  \end{align*}
  Therefore, we get
  \begin{align*}
    u(w(\alpha x + (1-\alpha) y))
    &\stackrel{(a)}{\leq} u\paren{\alpha w(x) + (1-\alpha)w(y)} \\
    &\stackrel{(b)}{\leq }\alpha u(w(x)) + (1-\alpha) u(w(y)).
  \end{align*}
  Here, step (a) is due to the fact that $u$ is non-decreasing, and step (b) is due to the convexity of $u$. Therefore $u\circ w$ is convex. If $w$ is non-decreasing, it is clear that $u\circ w$ is also non-decreasing, hence completes the proof.
\end{proof}

\begin{lemma}[\citet{horst1999dc}]\label{lem:dc-construction}
  Let $\mulfun:\reals\rightarrow \reals$ be a non-decreasing, twice continuously differentiable function. Then $\mulfun$ can be represented as the difference between two non-decreasing convex functions.
\end{lemma}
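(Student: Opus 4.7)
The plan is to build $u_1,u_2$ from $\mulfun$ itself plus a quadratic regularizer, chosen so that both functions are convex, and then absorb an affine correction to force both to be non-decreasing. Since $\mulfun\in C^2$ and (for the paper's use case) is analyzed on a bounded domain $[a,b]\subset\reals$, its second derivative is bounded on $[a,b]$. The first step is therefore to pick a constant $L\ge 0$ with $\mulfun''(x)+L\ge 0$ for all $x\in[a,b]$; continuity of $\mulfun''$ on a compact interval guarantees such $L$ exists.

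Given such an $L$, I would define the candidates
\begin{align*}
u_1(x) \;=\; \mulfun(x) + \tfrac{L}{2}x^2 + Kx, \qquad
u_2(x) \;=\; \tfrac{L}{2}x^2 + Kx,
\end{align*}
where $K$ is a constant to be fixed. By construction $u_1''(x)=\mulfun''(x)+L\ge 0$ and $u_2''(x)=L\ge 0$, so both $u_1$ and $u_2$ are convex on $[a,b]$, and their difference is exactly $\mulfun$. Because $\mulfun$ is non-decreasing we have $\mulfun'\ge 0$, so it is enough to pick $K$ large enough to ensure $Lx+K\ge 0$ on $[a,b]$: this forces $u_2'(x)=Lx+K\ge 0$, and in turn $u_1'(x)=\mulfun'(x)+(Lx+K)\ge 0$. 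On a bounded domain such a $K$ trivially exists, which completes the argument.

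The hard part, and the only reason one needs to cite a DC-programming reference at all, is the global-in-$x$ formulation of the lemma. Literally on $\reals$, $\mulfun''$ need not be bounded below, and even if it were, no affine correction $Lx+K$ can be non-negative everywhere. The standard fix, used in the Horst--Thoai reference, is a local-to-global patching argument over a countable cover of $\reals$ by overlapping compact intervals, exploiting that both convexity and monotonicity are preserved under addition of convex non-decreasing functions so that the local DC decompositions can be spliced consistently. For the purposes of this paper, however, only the bounded-domain version is ever invoked: the relevant $\mulfun(x)=(1-1/x)^{\batchBudget}$ is evaluated on $x\in\{1,\dots,|\queryPool(\constrDom)|\}$, so the straightforward bounded-interval construction above suffices and no global patching is needed.
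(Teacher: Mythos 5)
Your proof is correct and is essentially the paper's own argument: both compensate the negative curvature of $\mulfun$ by adding a convex non-decreasing augmentation scaled by a bound on $\abs{\min \mulfun''}$ (the paper writes $v = \mulfun + \tfrac{\beta}{\alpha}u$ for a generic non-decreasing strictly convex $u$; you instantiate it as $\tfrac{L}{2}x^2 + Kx$). Your added caveat about the domain is well taken --- the paper's proof also implicitly restricts to $x \ge 1$ when it sets $\beta = \abs{\min_{x\ge 1}\mulfun''(x)}$, and indeed no non-decreasing function on all of $\reals$ can have $u'' \ge \alpha > 0$ everywhere, so the bounded-interval reading you adopt is the one actually needed.
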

\begin{proof}
  Let $u: \reals \rightarrow \reals$ be a non-decreasing, strictly convex function, and $\alpha = \min_x u''(x)$; clearly, $\alpha > 0$.

  Let $\beta =\abs{\min_x \mulfun''(x)}$. Define
  \begin{align}\label{eq:dc-construction}
    v(x) = \mulfun(x) + \frac{\beta}{\alpha} u(x)
  \end{align}
  It is easy to verify that
  \begin{align*}
    v''(x)
    = \mulfun''(x) + \frac{\beta}{\alpha} u''(x)
    \geq \mulfun''(x) + \beta
    \geq 0.
  \end{align*}
  Hence, $v(x)$ is convex. Furthermore, since both $\mulfun$ and $u$ are non-decreasing, $v$ is also non-decreasing. Therefore, $\mulfun(x) = v(x) - \frac{\beta}{\alpha} u(x)$ is the difference between two non-decreasing convex functions.
\end{proof}
\begin{lemma}\label{lem:dc-compfunc}
  Let $\mulfun:\reals\rightarrow \reals$ be a non-decreasing, twice continuously differentiable function, and $w: \reals \rightarrow \reals$ a convex non-decreasing function, then $\mulfun \circ w$ can be represented as the difference between two non-decreasing convex functions.
\end{lemma}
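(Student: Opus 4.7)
The plan is to combine Lemma~\ref{lem:dc-construction} with Lemma~\ref{lem:convex-composition} in a direct two-step composition argument; no new analytic machinery is required.

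First, I would invoke Lemma~\ref{lem:dc-construction} on $\mulfun$. Since $\mulfun$ is non-decreasing and twice continuously differentiable, that lemma produces non-decreasing convex functions $v, u:\reals\to\reals$ and a non-negative constant $\gamma := \beta/\alpha \ge 0$ (with $\beta = |\min_x \mulfun''(x)|$ and $\alpha = \min_x u''(x)>0$) such that
\begin{align*}
\mulfun(y) = v(y) - \gamma\, u(y) \qquad \text{for all } y \in \reals.
\end{align*}
Composing both sides with $w$ on the right yields the candidate decomposition
\begin{align*}
(\mulfun\circ w)(x) = v(w(x)) - \gamma\, u(w(x)).
\end{align*}

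Second, I would verify that each of the two composite summands $v\circ w$ and $u\circ w$ is itself non-decreasing and convex. This is exactly the content of Lemma~\ref{lem:convex-composition}: since $v$ and $u$ are convex and non-decreasing and $w$ is convex, both $v\circ w$ and $u\circ w$ are convex; moreover, because $w$ is additionally non-decreasing by hypothesis, that same lemma also delivers that both compositions are non-decreasing. Scaling $u\circ w$ by the non-negative constant $\gamma$ preserves both properties, so $\gamma(u\circ w)$ is also non-decreasing and convex.

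Putting these together gives $\mulfun\circ w = (v\circ w) - \gamma(u\circ w)$ as a difference of two non-decreasing convex functions, as required. The only place any care is needed is ensuring the composed functions remain \emph{non-decreasing} (not merely convex); this is precisely why the hypothesis that $w$ is non-decreasing appears, and it slots directly into the ``furthermore'' clause of Lemma~\ref{lem:convex-composition}. Since there are no further technical obstacles, the proof is essentially the chaining of these two previously established lemmas.
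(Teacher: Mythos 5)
Your proposal is correct and matches the paper's own proof essentially verbatim: both apply Lemma~\ref{lem:dc-construction} to write $\mulfun = v - \frac{\beta}{\alpha}u$ with $v,u$ non-decreasing convex, compose with $w$, and then invoke Lemma~\ref{lem:convex-composition} (including its ``furthermore'' clause, using that $w$ is non-decreasing) to conclude that $v\circ w$ and $u\circ w$ are non-decreasing convex. No gaps; nothing further is needed.
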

\begin{proof}
  By \lemref{lem:dc-construction}, we can represent $\mulfun(x) = v(x) - \frac{\beta}{\alpha} u(x)$, where $u, v$ are non-decreasing convex functions, and $\alpha,\beta$ are as defined in Eq.~\eqref{eq:dc-construction}. Therefore,
  \begin{align*}
    \mulfun\circ w(x) = v \circ w(x) - \frac{\beta}{\alpha} \cdot u\circ w(x)
  \end{align*}
  By \lemref{lem:convex-composition}, $v \circ w$ and $u \circ w$ are both non-decreasing convex, which completes the proof.
\end{proof}


Now we are ready to prove \lemref{lm:dc-ds}.
\begin{proof}[Proof of \lemref{lm:dc-ds}]
  Let $g(\constrSelected)=\sum_{\ex \in \queryPool(\constrSelected)} \utility(\ex)$. By definition we have
  \begin{align*}
    \batchUtilApprox(\constrSelected)
    = g(\constrSelected) \paren{1-\paren{1-\frac{1}{|\queryPool(\constrSelected)|} }^{\batchBudget} }
    = g(\constrSelected) - g(\constrSelected){\paren{1-\frac{1}{|\queryPool(\constrSelected)|} }^{\batchBudget} }
  \end{align*}
  Let $\mulfun(x) = \paren{1-\frac{1}{x} }^{\batchBudget}$, and $w:\reals \rightarrow \reals$ be a convex function, such that $w(|\constrSelected|) = |\queryPool(\constrSelected)|$. Note that such function $w$ exists, because the set function $h(\constrSelected) := |\queryPool(\constrSelected)|$ is supermodular. Therefore, we have
  \begin{align*}
    \batchUtilApprox(\constrSelected)
    = g(\constrSelected) - g(\constrSelected) \cdot r\circ w(|\constrSelected|)
  \end{align*}
  Furthermore, note that $\mulfun$ is non-decreasing, twice continuously differentiable at $[1, \infty)$. By~\lemref{lem:dc-compfunc}, we get
  \begin{align}
    \batchUtilApprox(\constrSelected)
    &= g(\constrSelected) - g(\constrSelected) \cdot \paren{v \circ w(|\constrSelected|) - \frac{\beta}{\alpha} \cdot u\circ w(|\constrSelected|)} \nonumber \\
    &= g(\constrSelected)\paren{1+\frac{\beta}{\alpha} \cdot u\circ w(|\constrSelected|)} - g(\constrSelected) \cdot \paren{v \circ w(|\constrSelected|)}, \label{eq:app:dcds-construction}
  \end{align}
  where $u: \reals \rightarrow \reals$ can be any non-decreasing, strictly convex function, $\alpha = \min_x u''(x)$, $\beta =\abs{\min_{x\geq 1} \mulfun''(x)}$, and $v(x) = \mulfun(x) + \frac{\beta}{\alpha} u(x)$.

  We know from \lemref{lm:sum-supermodular} that $g$ is supermodular. Since both $1+\frac{\beta}{\alpha} \cdot u\circ w(x)$ and $v \circ w(x)$ are convex, then by \lemref{lm:supconvprod}, we know that both terms on the R.H.S. of Eq.~\eqref{eq:app:dcds-construction} are supermodular, and hence we obtain a DS decomposition of function $\batchUtilApprox$.
\end{proof}



\onecolumn
\newpage

\section{Supplemental Figures}

\renewcommand{\thefigure}{S\arabic{figure}}

\setcounter{figure}{0}

\begin{figure}[!h]
    \centering
	\includegraphics[trim={0pt 0pt 0pt 0pt}, width=.5\textwidth]{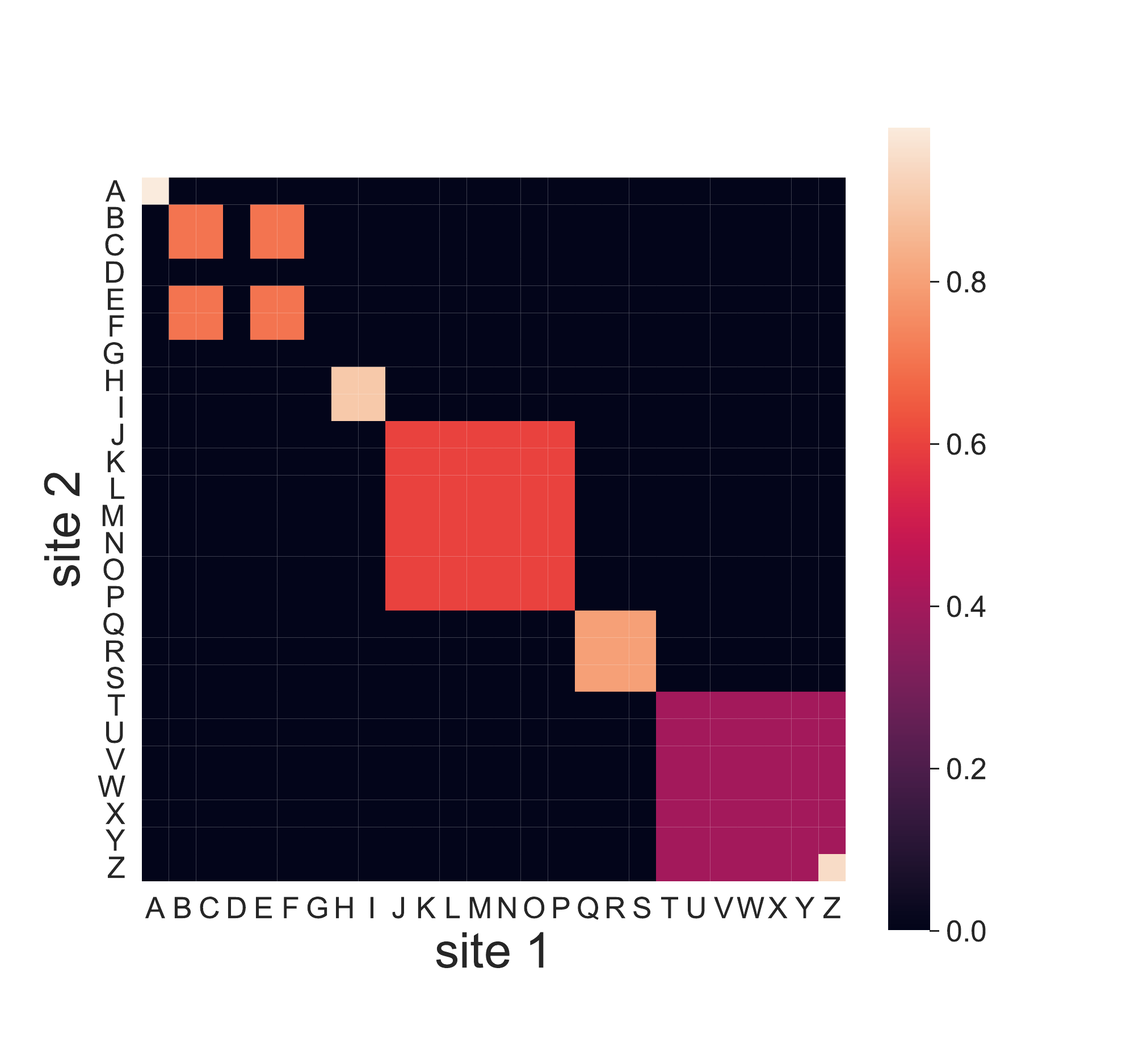}
	\caption{The cell values for the synthetic dataset with  $\numpos = 2$ and $|\constrDomAt{\pid}| = 26 \ \forall \pid\in \{1, 2\}$.}
	\label{fig:exp:heatmap}
\end{figure}

\begin{figure}[!h]
  \centering
  \begin{subfigure}[b]{.35\textwidth}
    \centering
    {
      \includegraphics[trim={0pt 0pt 0pt 0pt}, width=\textwidth]{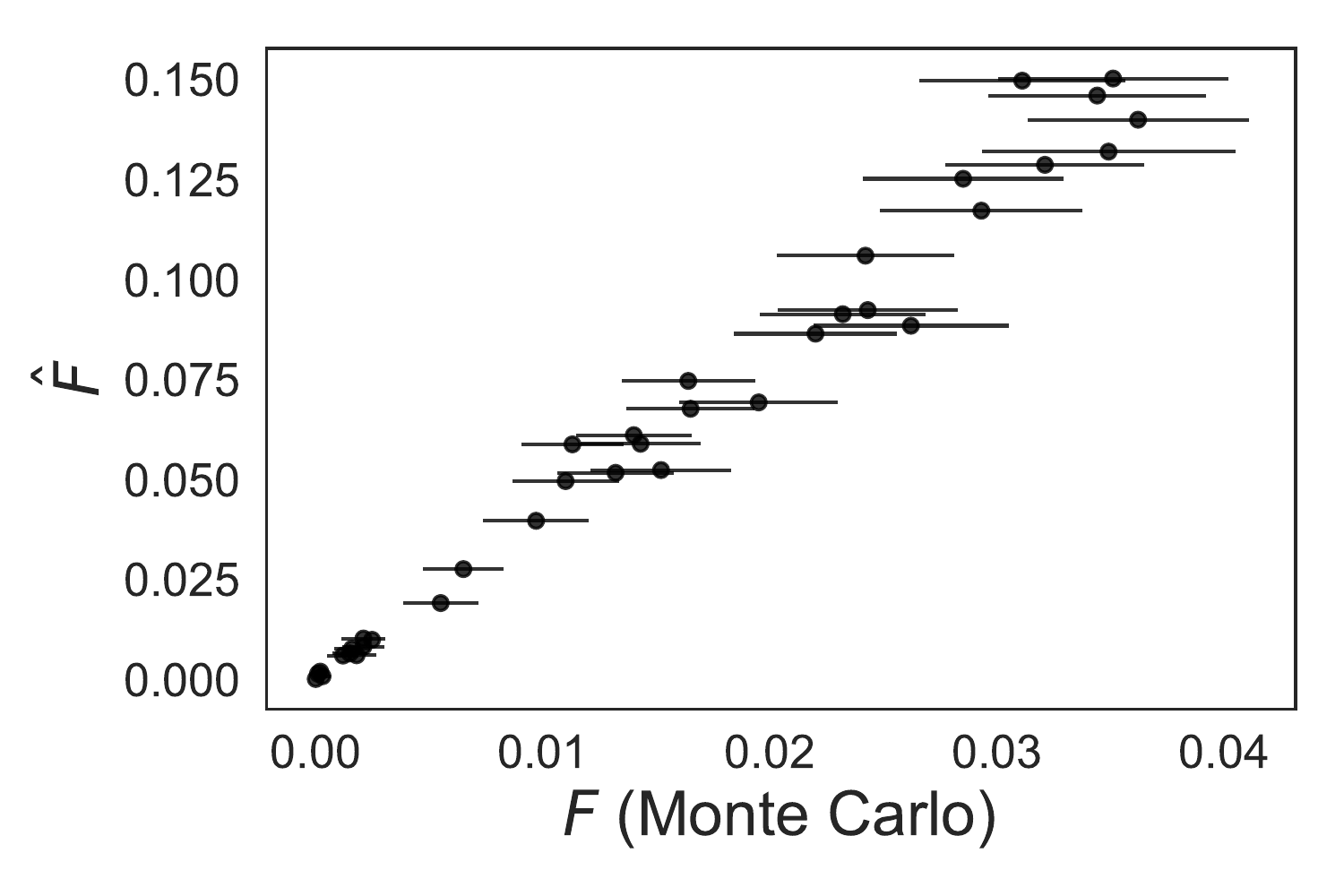}
      \caption{$\batchUtilApprox$ vs. $\batchUtil$ for GB1}
      \label{fig:gb1-comp}
    }
  \end{subfigure}
  \begin{subfigure}[b]{.35\textwidth}
    \centering
    {
      \includegraphics[trim={0pt 0pt 0pt 0pt}, width=\textwidth]{./fig/objectives_comparison}
      \caption{$\batchUtilApprox$ vs. $\batchUtil$ for PhoQ}
      \label{fig:phoq-comp}
    }
  \end{subfigure}
  \caption{Comparing $\batchUtilApprox$ (Eq.~\eqref{eq:batch-util-approx}) against the Monte Carlo estimates of $\batchUtil$ (Eq.~\eqref{eq:batch-util}). Error bars are standard errors for the Monte Carlo estimates. The approximate objective correlates well with Monte Carlo estimates of the exact objective.}
  \label{fig:exp:comparison}
\end{figure}

}{}
\end{document}